\theoremstyle{plain}
\newtheorem{theorem}{Theorem}[section]
\newtheorem{proposition}[theorem]{Proposition}
\newtheorem{lemma}[theorem]{Lemma}
\newtheorem{corollary}[theorem]{Corollary}
\theoremstyle{definition}
\newtheorem{definition}[theorem]{Definition}
\newtheorem{assumption}[theorem]{Assumption}
\theoremstyle{remark}
\newtheorem{remark}[theorem]{Remark}
\newcommand{\E}{\mathbb{E}}
\newcommand{\va}{{\mathbf {a}}}
\newcommand{\vg}{{\mathbf {g}}}
\newcommand{\vh}{{\mathbf {h}}}
\newcommand{\vs}{{\mathbf {s}}}
\newcommand{\vw}{{\mathbf {w}}}
\newcommand{\vx}{{\mathbf {x}}}
\newcommand{\vy}{{\mathbf {y}}}
\newcommand{\alglinelabel}{%
  \addtocounter{ALC@line}{-1}
  \refstepcounter{ALC@line}
  \label
}
\begin{document}

\onecolumn


\icmltitle{Kronecker-factored Quasi-Newton Methods for Deep Learning}



\icmlsetsymbol{equal}{*}

\begin{icmlauthorlist}
\icmlauthor{Yi Ren}{yyy}
\icmlauthor{Achraf Bahamou}{yyy}
\icmlauthor{Donald Goldfarb}{yyy}
\end{icmlauthorlist}

\icmlaffiliation{yyy}{Department of Industrial Engineering and Operations Research, Columbia University, New York NY, USA}

\icmlcorrespondingauthor{Yi Ren}{yr2322@columbia.edu}

\icmlkeywords{Machine Learning, ICML}

\vskip 0.3in





\printAffiliationsAndNotice{}  

\begin{abstract}

Second-order methods have the capability of accelerating optimization by using much richer curvature information than first-order methods. However, most are impractical for deep learning, where the number of training parameters is huge. In \citet{goldfarb2020practical}, practical quasi-Newton methods were proposed that approximate the Hessian of a multilayer perceptron (MLP) model by a layer-wise block diagonal matrix where each layer's block is further approximated by a Kronecker product corresponding to the structure of the Hessian restricted to that layer. Here, we extend these methods to enable them to be applied to convolutional neural networks (CNNs), by analyzing the Kronecker-factored structure of the Hessian matrix of convolutional layers. Several improvements to the methods in \citet{goldfarb2020practical} are also proposed that can be applied to both MLPs and CNNs. These new methods have memory requirements comparable to first-order methods and much less per-iteration time complexity than those in \citet{goldfarb2020practical}. Moreover, convergence results are proved for a variant under relatively mild conditions. Finally, we compared the performance of our new methods against several state-of-the-art (SOTA) methods on MLP autoencoder and CNN problems, and found that they outperformed the first-order SOTA methods and performed comparably to the second-order SOTA methods.

\end{abstract}



\section{Introduction}




    
    
    
    
    



First-order methods, including stochastic gradient descent (SGD) \citep{robbins1951stochastic} and the class of adaptive learning rate methods, such as AdaGrad \citep{duchi2011adaptive}, RMSprop \citep{hinton2012neural}, and Adam \citep{kingma2014adam}, are currently the most popular methods for training deep neural networks (DNNs),
including multilayer perceptrons (MLPs) and convolutional neural networks (CNNs), etc. Although these methods are fairly easy to implement, they use, at most, a very limited amount of curvature information to facilitate optimization. Vanilla SGD uses no curvature information, while the adpative learning rate methods use a diagonal pre-conditioning matrix based on the second moment of the gradient.

On the other hand, second-order methods use the rich curvature information of the problem to accelerate optimization.
Beside the classical Newton method, sub-sampled Newton methods have been proposed to handle large data sets (see e.g., \citet{xu2019newton}), but 
when the number of training parameters is huge, inverting the Hessian matrix is impractical. {Hence, q}uasi-Newton (QN) methods, ranging from the original BFGS \cite{broyden1970convergence,fletcher1970new,goldfarb1970family,shanno1970conditioning} and limited-memory BFGS (L-BFGS) \cite{liu1989limited}, to more recent developments that take into account stochasticity and/or non-convexity \citep{byrd2016stochastic,gower2016stochastic,wang2017stochastic},
{have been considered}. Other methods use surrogates to the Hessian, such as the Gauss-Newton (GN) and Fisher matrices (e.g., natural gradient (NG) method \cite{amari2000adaptive}, Hessian-free method \cite{martens2010deep}, Krylov subspace method \cite{vinyals2012krylov}, sub-sampled GN and NG methods \cite{ren2019efficient}, etc). However, in all of the above-mentioned second-order methods, whether they use the Hessian or a surrogate, the size of the curvature matrix becomes prohibitive when the number of training parameters is huge.

Consequently, second-order methods for training DNNs have been proposed that make use of layer-wise block-diagonal  approximations to Hessian and Fisher matrices, where each diagonal block is further approximated as the Kronecker product of smaller matrices 
 to reduce their memory and computational requirements.
One of the most popular methods of this type is the NG method KFAC, which was originally proposed for
MLPs \citep{martens2015optimizing}, and later extended to CNNs \citep{grosse2016kronecker} and other models \citep{wu2017scalable,martens2018kroneckerfactored}. 
Other Kronecker-factored NG methods have also been proposed in \citet{heskes2000,povey2014parallel,george2018fast}.
A Kronecker-factored QN method (which we will refer to as K-BFGS-20) was proposed in \citet{goldfarb2020practical}. This method was only designed for MLPs and serves as the starting point for the methods developed in this paper.
The approximate generalized GN method KFRA \cite{botev2017practical} adopts a block-diagonal Kronecker-factored approximation to 
the {GN matrix} for 
{MLPs} and computes the diagonal block approximations recursively. Finally,
Shampoo \citep{gupta2018shampoo} and TNT \citep{ren2021tensor} also use block-diagonal Kronecker-factored pre-conditioning matrices, stemming from adaptive learning rate methods and natural gradient methods, respectively

\subsection{Our Contributions}

In this paper, we propose brand new versions of K-BFGS, that are substantial extensions to the Kronecker-factored quasi-Newton methods for MLPs proposed in \citet{goldfarb2020practical}. Not only can they be applied to CNNs, but they also incorporate several generic improvements beyond  what was proposed in \citet{goldfarb2020practical}.


In order to enable K-BFGS to train CNN models, 
we first show that for a convolutional layer,
the gradient and Hessian restricted to that layer can be approximated by the Kronecker product of two vectors and matrices, respectively, extending the results in \citet{botev2017practical,wu2020dissecting}, etc. 
We then formalize exactly how K-BFGS should be applied to optimize the parameters in convolutional layers, {with the Kronecker-factored approximation of the Hessian as the basis}. 


{Our generic improvements to \citet{goldfarb2020practical} include} a new double damping technique $D_{P} D_{LM}$ and a "minibatched" Hessian-action BFGS,
both of which are applicable to convolutional and fully-connected layers. 



Our proposed methods
have comparable memory requirements to those of first-order methods, while their per-iteration time complexities are smaller, and in many cases, much smaller than those of {\citet{goldfarb2020practical} and other} popular second-order methods such as KFAC. Further, we prove convergence results for a limited memory K-BFGS(L) variant
under 
relatively mild conditions.


We conducted experiments on several MLP autoencoder problems, which demonstrated that our improved versions of K-BFGS outperformed the ones proposed in \citet{goldfarb2020practical}. Moreover, on several well-studied CNN problems, our proposed method outperformed the 1st-order SOTA methods SGD-m (i.e., SGD with momentum) and Adam and performed comparably to the 2nd-order SOTA method KFAC.


\section{Kronecker-factored Structures in CNNs}
\label{sec_1}




{In this secton, after first discussing the computations involved in a CNN model, we describe the Kronecker structures of the gradient and Hessian of the loss function with respect to a convolutional layer's parameters.}



\subsection{Convolutional Neural Networks (CNNs)}

We consider a CNN with $L$ trainable layers (for simplicity, assume they are all convolutional layers),
with 
parameters consisting of
a weight tensor $w_l$ and a bias vector $b_l$ (shapes specified later) for $l \in \{ 1, ..., L \}$ and a loss function $\mathcal{L}$. For a data-point $(x, y)$, $x$ is fed into the CNN as input, yielding $\hat{y}$ as the output. 
The loss $\mathcal{L}(\hat{y}, y)$ between the output $\hat{y}$ and $y$ is a non-convex function of
{the set of all trainable parameters} 
$\theta := \{ w_1, b_1, ..., w_L, b_L \}$.

For a dataset that contains multiple data-points indexed by
{$n = 1, ..., N$}, let
{$f(n;\theta)$} denote the loss from the
{$n$th} data-point. Thus, viewing the dataset as an empirical distribution, the actual loss function that we wish to minimize is
{
$$f(\theta) := \E_n [f(n;\theta)] := \frac{1}{N} \sum_{n=1}^N f(n;\theta).$$
}

Let us now focus on a single convolutional layer
of the CNN, with its own weight tensor $w$ and bias vector $b$ as the trainable parameters.
For simplicity, we omit the layer index $l$, and assume that:
\begin{enumerate}[topsep=0pt,itemsep=-1ex,partopsep=1ex,parsep=1ex]
\item
the convolutional layer is 2-dimensional;
\item 
the filters are of size $(2R+1) \times (2R+1)$, with spatial offsets from the centers of each filter indexed by $\delta \in \Delta := \{ -R, ..., R \} \times \{ -R, ..., R \}$;
\item
the stride is of length 1, and the padding is equal to $R$, so that the sets of input and output spatial locations ($t \in \mathcal{T} \subset \mathbf{R}^2$) are the same.\footnote{The derivations in this paper can also be extended to the case where stride is greater than 1.};
\item 
the layer has $J$ input channels indexed by $j = 1, ..., J$, and $I$ output channels indexed by $i = 1, ..., I$.
\end{enumerate}


The
weight tensor $w \in \mathbf{R}^{I \times J \times (2R+1) \times (2R+1)}$ corresponds to the elements of all of the filters in the layer.
Hence, an element of $w$ is denoted as $w_{i,j,\delta}$, where the first two indices $i,j$ are the output/input channels, and the last two indices $\delta$ are the spatial offset within a filter. The bias $b \in \mathbf{R}^I$ is a length-$I$ vector.


Let $a$,
with components $a_{j, t}$,
denote the input to the layer after padding is added, where $t$ denotes the spatial location of the padded input and $j = 1, ..., J$; 
and let $h$, with components $h_{i, t}$, denote the 
output of the layer, where $t$ denote the spatial location of the output and $i = 1, ..., I$.
Given $a$, $h$ is computed as
\begin{align}
    h_{i,t} = \sum_{j=1}^J \sum_{\delta \in \Delta} w_{i,j,\delta} a_{j,t+\delta} + b_i,
    \quad t \in \mathcal{T}, \, i = 1, ..., I.
    \label{eq_12}
\end{align}
Note that we only consider the linear transformation of the convolutional layer. In other words,
if there is any activation or any other operations such as batch normalization afterwards, we view it as being separate from the layer.  

\subsection{Kronecker-factored Structure of Gradient and Hessian for Convolutional Layers}


Recent work has shown 
that curvature information in DNNs has the property of being a sum (or average, equivalently) of Kronecker products, beginning with the development of KFAC \citep{martens2015optimizing,grosse2016kronecker}, which showed this for Fisher matrices.
\citet{botev2017practical} and \citet{goldfarb2020practical} showed that Hessian matrices are also a sum of Kronecker product
for fully-connected layers, while \citet{bakker2018outer} and \citet{wu2020dissecting} extended this result to convolutional layers.

Based on the above Kronecker-factored structures, practical second-order methods for deep learning models were proposed, by approximating the curvature as a single Kronecker product, including KFAC \citep{martens2015optimizing,grosse2016kronecker}, KFRA \citep{botev2017practical}, and K-BFGS-20 \citep{goldfarb2020practical}.

Without developing any training method, \citet{wu2020dissecting} proposed a single Kronecker product approximation to the Hessian   
{of a DNN that consists of alternating} convolutional 
layers and ReLU activation functions, without any further modifications, such as batch normalization or skip connections
Moreover, assuming all activation functions are ReLU 
results in
the "second order" term in the Hessian being zero and the Hessian being equivalent to a Gauss-Newton matrix (or equivalently, a Fisher matrix). 
In contrast, we show that the Hessian 
has a Kronecker-factor structure for convolutional layers, without any assumptions about the activation functions or model architectures.

\subsubsection{Case 1: Single Data-point}




We now consider a single data-point, omitting the index $n$ for simplicity,
and derive the structure of gradient and Hessian with respect to the loss function $f(\cdot;\theta)$.

We define $\mathcal{D} X := \frac{\partial f}{\partial X}$ for any variable $X$ and ${\text{vec}}(\cdot)$ to be the vectorization of a matrix. For the
output and input of the layer, we define, respectively, the vectors
$$\vh_{t} := \left( h_{1,t}, ..., h_{I,t} \right)^\top \in \mathbb{R}^I,$$
\begin{align*}
    & \va_{t} := \left( a_{1,t+\delta_1}, ..., a_{J,t+\delta_{|\Delta|}}, 1 \right)^\top \in \mathbb{R}^{J|\Delta|+1},
\end{align*}
for $t \in \mathcal{T}$. Note that a homogeneous coordinate is concatenated at the end
of $\va_t$.

For the weights and biases, we define the vectors
\begin{align*}
    \vw_{i}
    & : = \left( w_{i,1,\delta_1}, ..., w_{i,J,\delta_{|\Delta|}}, b_i \right)^\top \in \mathbb{R}^{J|\Delta|+1},
\end{align*}
for $i = 1, ..., I$, 
and from them the matrix
\begin{align}
    {W} := ({\vw}_1, ..., {\vw}_I)^\top \in \mathbb{R}^{I \times (J|\Delta|+1)},
    \label{eq_19}
\end{align}
which contains all the parameters of the convolutional layer.

The following Theorem \ref{thm_4} gives the structure of gradient and Hessian of $W$ for a single data-point, the proof of which can be found in Section \ref{sec_11} in the Appendix. 

\begin{theorem}
For a single data-point, 

i) (Structure of gradient) $$\text{vec}( \mathcal{D} {W}) = \sum_{t \in \mathcal{T}} {\va}_t \otimes \mathcal{D} \vh_t$$ 
is the sum of $|\mathcal{T}|$ Kronecker products;

ii) (Structure of Hessian)
\begin{align*}
    \frac{\partial^2 f}{\partial \text{vec}(W)^2}
    = \sum_{t,t' \in \mathcal{T}} A_{t,t'} \otimes G_{t,t'}
\end{align*}
is the sum of $|\mathcal{T}|^2$ Kronecker products,
where 
\begin{align}
    & A_{t, t'}
    := \va_t \va_{t'}^\top \in \mathbb{R}^{(J|\Delta|+1) \times (J|\Delta|+1)},
    \label{eq_5}
    \\
    & G_{t,t'} := \frac{\partial^2 f}{\partial \vh_{t} \partial \vh_{t'}} \in \mathbb{R}^{I \times I}.
    \nonumber
\end{align}
for $t,t' \in \mathcal{T}$.

\label{thm_4}
\end{theorem}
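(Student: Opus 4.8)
The plan is to reduce both claims to an elementary consequence of the fact that, for a fixed input $a$, the layer's pre-activation output is an affine (hence curvature-free) function of the weights. First I would record the compact form of the forward map \eqref{eq_12}: by the definitions of $\vw_i$ and $\va_t$ (the appended $1$ in $\va_t$ and the entry $b_i$ in $\vw_i$ jointly encode the bias), each output entry is the inner product $h_{i,t} = \vw_i^\top \va_t$, so that $\vh_t = W \va_t$ for every $t \in \mathcal{T}$. The key structural point is that, since the input $a$ is produced by earlier layers and does not depend on the current $W$, the map $W \mapsto \vh_t$ is linear; consequently $f$ depends on $W$ only through the collection $\{\vh_t\}_{t \in \mathcal{T}}$.

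For part i), I would differentiate through this composition. From $h_{i,t} = \vw_i^\top \va_t$ we get $\partial h_{i',t}/\partial w_{i,j,\delta} = \delta_{i'i}\, a_{j,t+\delta}$, and the chain rule yields $\mathcal{D}\vw_i = \sum_{t \in \mathcal{T}} (\mathcal{D} h_{i,t})\, \va_t$, i.e.\ row-wise $\mathcal{D}W = \sum_{t \in \mathcal{T}} (\mathcal{D}\vh_t)\, \va_t^\top$. Applying the standard vectorization identity $\text{vec}(u v^\top) = v \otimes u$ to each rank-one term then gives $\text{vec}(\mathcal{D}W) = \sum_{t \in \mathcal{T}} \va_t \otimes \mathcal{D}\vh_t$, which is exactly claim i).

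For part ii), I would invoke the second-order chain rule for $f$ viewed as a function of $\text{vec}(W)$ through $\{\vh_t\}$. In general this produces two terms: one quadratic in the first derivatives of $h$ weighted by the Hessian of $f$ in $h$, and one linear in the gradient of $f$ weighted by the second derivatives of $h$ in $W$. Linearity of $\vh_t$ in $W$ kills the second term, since $\partial^2 h_{i,t}/(\partial w \, \partial w) = 0$, leaving the entrywise identity $\partial^2 f/(\partial W_{i_1,m_1}\,\partial W_{i_2,m_2}) = \sum_{t,t'} (G_{t,t'})_{i_1,i_2}\,(\va_t)_{m_1}(\va_{t'})_{m_2}$, where $m$ denotes the flattened $(j,\delta)$/bias index. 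Since $(\va_t)_{m_1}(\va_{t'})_{m_2} = (A_{t,t'})_{m_1,m_2}$ with $A_{t,t'} = \va_t\va_{t'}^\top$ as in \eqref{eq_5}, this is precisely the $((m_1,i_1),(m_2,i_2))$ entry of $\sum_{t,t'} A_{t,t'}\otimes G_{t,t'}$ under the column-major convention compatible with $\text{vec}(W)$.

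I expect the only real obstacle to be bookkeeping rather than mathematics: one must fix a single vectorization/Kronecker ordering and verify that the flattened index $m$ enumerating $(j,\delta)$ together with the trailing homogeneous coordinate lines up consistently across $\va_t$, the rows $\vw_i$ of $W$, and the entry formula $(A\otimes G)_{(m_1-1)I+i_1,\,(m_2-1)I+i_2} = A_{m_1,m_2}\,G_{i_1,i_2}$. Checking the bias slice ($m = J|\Delta|+1$, where $(\va_t)_m \equiv 1$) separately confirms that the homogeneous coordinate reproduces $\partial h_{i,t}/\partial b_{i'} = \delta_{i'i}$, so the bias requires no special casing and the two Kronecker-sum formulas follow uniformly.
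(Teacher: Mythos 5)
Your proposal is correct and follows essentially the same route as the paper's proof: a chain-rule computation of $\mathcal{D}W = \sum_t \mathcal{D}\vh_t\,\va_t^\top$ followed by the identity $\text{vec}(uv^\top)=v\otimes u$ for part i), and a second differentiation through the outputs $\{\vh_t\}$ (where the linearity of $\vh_t = W\va_t$ in $W$, equivalently the independence of $\va_t$ from $W$, ensures no extra second-derivative term appears) for part ii). The paper carries this out index-by-index over $w_{i,j,\delta}$ and $b_i$, while you package the same steps in the compact affine form $h_{i,t}=\vw_i^\top\va_t$ and an explicit Kronecker-index check, but the mathematical content is identical.
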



If we {\bf assume} that $G_{t,t'} = 0$ for $t \neq t'$, we have that
\begin{align}
    \frac{\partial^2 f}{\partial \text{vec}(W)^2}
    \approx \sum_{t \in \mathcal{T}} A_{t, t} \otimes G_{t, t}.
    \label{eq_{1.5}}
\end{align}

As in other methods that have been proposed for training DNNs that use Kronecker factored approximations to Hessian or other pre-conditioning matrices \citep{martens2015optimizing,grosse2016kronecker,botev2017practical,goldfarb2020practical}, we further approximate $\frac{\partial^2 f}{\partial \text{vec}(W)^2}$ by a single Kronecker product.
To achieve this, we now approximate the average of the Kronecker products of a set of matrix pairs 
{$\{(U_t,V_t)\}$} by the Kronecker product of the averages of individual sets of matrices 
{$\{U_t\}$, $\{V_t\}$}, i.e.,
\begin{align}
\frac{1}{|\mathcal{T}|} \sum_{t \in \mathcal{T}} U_t \otimes V_t
\approx
\left( \frac{1}{|\mathcal{T}|} \sum_{t \in \mathcal{T}} U_t \right) \otimes \left( \frac{1}{|\mathcal{T}|} \sum_{t \in \mathcal{T}} V_t \right).
\label{eq_2}
\end{align}
Applying (\ref{eq_2}) to  (\ref{eq_{1.5}}), we have that
\begin{align}
    & \frac{\partial^2 f}{\partial \text{vec}(W)^2}
    \approx |\mathcal{T}| \cdot \left( \frac{1}{|\mathcal{T}|} \sum_{t \in \mathcal{T}} A_{t, t} \right) \otimes \left( \frac{1}{|\mathcal{T}|} \sum_{t \in \mathcal{T}} G_{t, t} \right)
    \nonumber
    \\
    = & \left( \sum_{t \in \mathcal{T}} A_{t, t} \right) \otimes \left( \frac{1}{|\mathcal{T}|} \sum_{t \in \mathcal{T}} G_{t, t} \right).
    \label{eq_2.5}
\end{align}
Note that the assumptions we made in deriving (\ref{eq_2.5}) are analogous to the IAD (Independent Activations and Derivatives), SH (Spatial Homogeneity), and SUD (Spatially Uncorrelated Derivatives) assumptions in \citet{grosse2016kronecker}. 
Lastly, one can similarly derives a single Kronecker approximation for the gradient, using Theorem \ref{thm_4} and (\ref{eq_2}).

\subsubsection{Case 2: Multiple Data-points}

In the case of multiple data-points, we use
$(n)$ to denote the index of a data-point. 
To approximate the average Hessian across multiple data-points as a single Kronecker product, 
we again use (\ref{eq_2}), but averaging over the data points this time. By (\ref{eq_2.5}),
we have that
%
%
%
%
\begin{align}
    & \frac{\partial^2 f}{\partial \text{vec}(W)^2}
    = \E_n \left[ \frac{\partial^2 f(n)}{\partial \text{vec}(W)^2} \right]
    \nonumber
    \\
    \approx \; & \E_n \left[ \left( \sum_{t \in \mathcal{T}} A_{t, t}(n) \right) \otimes \left( \frac{1}{|\mathcal{T}|} \sum_{t \in \mathcal{T}} G_{t, t}(n) \right) \right]
    \\
    \approx \; & A \otimes G,
    \label{eq_4}
\end{align}
where
\begin{align}
    A := \E_n \left[ \sum_{t \in \mathcal{T}} A_{t, t}(n) \right],
    G := \E_n \left[ \frac{1}{|\mathcal{T}|} \sum_{t \in \mathcal{T}} G_{t, t}(n) \right].
    \label{eq_8}
\end{align}
(\ref{eq_4}) will serve as the foundation for us to develop the Kronecker-factored QN method for CNNs below. 

\section{
Our New K-BFGS Method
}

\subsection{K-BFGS-20 in \citet{goldfarb2020practical}}

For the Kronecker-factored quasi-Newton method
that \citet{goldfarb2020practical} proposed for training multiplayer perceptrons (MLPs), they approximated the Hessian of the loss function by a block diagonal matrix, where each block corresponds to the Hessian w.r.t. the parameters of a fully-connected layer. As a result, the parameters of each layer can be updated separately. 

For a single fully-connected layer in the MLP, \citet{goldfarb2020practical} approximates the Hessian restricted to that layer as $H_A \otimes H_G$, where $H_A$ and $H_G$ are some approximations to some matrices $A^{-1}$ and $G^{-1}$, respectively. As a results, by the property of Kronecker product,
they update the parameters of this fully-connected layer by computing
\begin{align}
    W^+
    = W - \alpha H_G (\mathcal{D} W) H_A,
    \label{eq_18}
\end{align}
where $W$ denotes the parameters (including weights and biases) in the fully-connected layer and $\alpha$ denotes the learning rate.

Furthermore, in \citet{goldfarb2020practical}, $H_A$ and $H_G$, as the approximations to $A^{-1}$ and $G^{-1}$, are estimated with the BFGS (or L-BFGS)
updating formula. To be more specific, given an approximation
{$H_G$} to the inverse of a symmetric matrix
$G$, the BFGS updating formula  
computes
\begin{align}
    H_G^+ = (I - \rho \vs_G \vy_G^\top) H (I - \rho \vy_G \vs_G^\top) + \rho \vs_G \vs_G^\top,
    \label{eq_13}
\end{align}
with given vectors
{$\vs_G$, $\vy_G$} 
{and $\rho = \frac{1}{\vy_G^\top \vs_G}$}.
$H_A$ is similarly computed with BFGS updating.

Lastly, the $(\vs_G, \vg_G)$ pairs used by $H_G$ is derived from the definition of $G$ as a Hessian matrix w.r.t. the output of the fully-connected layer. A double damping procedure with damping term $\lambda_G$ is proposed to deal with the non-convexity of $G$. For $H_A$, \citet{goldfarb2020practical} keeps track of an estimation to $A$ and generate the $(\vs_A, \vy_A)$ pairs with a "Hessian-action" approach, i.e. letting $\vs_A = A \vy_A + \lambda_A \vs_A$, where $\lambda_A$ is the damping term.

\subsection{What's New in Our K-BFGS Method?}

In this part, we describe the generic improvements of our new methods beyond K-BFGS-20, as well as how to extend the methods to convolutional layers. The complete pseudo-code and other implementation details are described in Sec \ref{sec_14} in the Appendix.

\subsubsection{Generic Improvements beyond K-BFGS-20}
\label{sec_13}

\begin{algorithm}[tb]
    \caption{$D_{P} D_{LM}$ 
    ($P$ stands for Powell's damping and $LM$ stands for Levenberg-Marquardt damping)
    }
    \label{algo_5}
    \begin{algorithmic}[1]
    
    \STATE
    {\bf Input:}
    $\vs$, $\vy$;
    {\bf Output:}
    $\tilde{\vs}$, $\tilde{\vy}$;
    {\bf Given:}
    $H$, $0 < \mu_1 < 1$, $\mu_2 > 0$
    
    \IF{$\vs^\top \vy < \mu_1 \vy^\top H \vy$}
    \STATE {$\theta_1 = \frac{(1-\mu_1) \vy^\top H \vy}{\vy^\top H \vy - \vs^\top \vy}$}
    \ELSE
    \STATE $\theta_1 = 1$
    \ENDIF
    

    \STATE 
    $\tilde{\vs} = \theta_1 \vs + (1-\theta_1) H \vy$
    \COMMENT{Powell's damping on $H$}

    
    \STATE 
    $\tilde{\vy} = \vy + \mu_2 \tilde{\vs}$
    \COMMENT{Levenberg-Marquardt damping on $H^{-1}$}

    \STATE
    {\bf return:} $\tilde{\vs}$, $\tilde{\vy}$
    
    \end{algorithmic}
\end{algorithm}

{\bf Improvement \#1: $D_{P} D_{LM}$.} 
In the double damping procedure proposed in \citet{goldfarb2020practical}, the parameter $\mu_2$ can only take values in $(0, 1]$, which restricts its interpretation as a Levenberg-Marquardt (LM) damping term. 

We propose a new procedure $D_{P} D_{LM}$ (Algorithm \ref{algo_5}), in which the parameter $\mu_2$ {($= \lambda_G$)} is more directly related to LM damping and can take any values in $(0, \infty)$. To see this connection, note that in Algorithm \ref{algo_5}, after Powell's damping on $H$, $\tilde{\vs}^\top \vy \ge \mu_1 \vy^\top H \vy \ge 0$. Hence, $\tilde{\vs}^\top \tilde{\vy} \ge \mu_2 ||\tilde{\vs}||^2$, which can be viewed as LM damping with a parameter of $\mu_2$, since $G$ is then lower bounded by $\mu_2 I$. 




{\bf Improvement \#2: "minibatched" Hessian-action BFGS.} In approximating $A^{-1}$, \citet{goldfarb2020practical} uses the so-called "Hessian-action" approach, in which they
computes $A \vs_A$ with an estimation of $A$ from a moving average scheme with a given hyper-parameter on decaying. In other words, one needs to compute $A$ from each minibatch and always keep track of a moving average of it, which could be time consuming
when $A$ is large. {The large size of $A$ is particularly true for convolutional layers.}

In this paper, we propose a "minibatched" version of Hessian-action BFGS, i.e. computing $A \vs$ with $A$ estimated from only the current minibatch. By doing so, we avoid the explicit computation of $A$, replacing it with a direct matrix-vector product $A \vs$. Moreover, the hyper-parameter on decaying is no longer needed, which could potentially save effort in hyper-parameter tuning. This improvement is applicable to both fully-connected and convolutional layers, the latter of which is described in Section \ref{sec_12}.


\subsubsection{Extension to Convolutional Layers}
\label{sec_12}

For convolutional layers, we similarly use (\ref{eq_18}) to update the parameters $W$ defined in (\ref{eq_19}), where $H_G$ and $H_A$ corresponds to some approximations to the inverse of $G$ and $A$ defined in (\ref{eq_8}). 

To approximate the inverse of $G$ defined in (\ref{eq_8}), i.e. compting $H_G$, we use the BFGS updating formula (\ref{eq_13}), or L-BFGS. (We name our method K-BFGS and K-BFGS(L), respectively, when BFGS or L-BFGS is used for estimating $H_G$.) 
For a fixed data-point index $n$ and $t \in \mathcal{T}$, the $(\vs, \vy)$ pair for $G_{t,t}(n) = \frac{\partial^2 f(n)}{\partial \vh_{t}(n)^2}$ is $(\vs, \vy) = (\vh_t^+(n) - \vh_t(n), \mathcal{D} \vh_t^+(n) - \mathcal{D} \vh_t(n))$, where the "$+$" sign denotes that we compute the value after a step of the parameters has been taken. Hence, for a fixed $n$, the $(\vs, \vy)$ pair for $\frac{1}{|\mathcal{T}|} \sum_{t \in \mathcal{T}} G_{t, t}(n)$ is $\left( \overline{\vh^+}(n) - \overline{\vh}(n), \overline{\mathcal{D} \vh^+}(n) - \overline{\mathcal{D} \vh}(n) \right)$,
where $\overline{X}$ denotes the value of $X_t$ averaged over the spatial locations $\mathcal{T}$ for any quantity $X$, i.e., $\overline{X} := \frac{1}{|\mathcal{T}|} \sum_{t \in \mathcal{T}} X_t$. 
Finally, the $(\vs, \vy)$ for $G$ in (\ref{eq_8}) is
\begin{align}
    & \vs_G = \E_n \left[ \overline{\vh^+}(n) - \overline{\vh}(n) \right],
    \label{eq_6}
    \\
    & \vy_G = \E_n \left[ \overline{\mathcal{D} \vh^+}(n) - \overline{\mathcal{D} \vh}(n) \right].
    \label{eq_7}
\end{align}

Combining the above with the $D_{P} D_{LM}$ approach described in Section \ref{sec_13}, the final $(\vs, \vy)$ pair we use is $D_{P} D_{LM}(\vs_G, \vy_G)$. 

To approximate the inverse of $A$ defined in (\ref{eq_8}), i.e., computing $H_A$, we use the "minibatched" Hessian-action BFGS described in Section \ref{sec_13}. 
Given the current estimate $H_A$ of $A^{-1}$, the $(\vs, \vy)$ pair for updating $H_A$ are computed as: 
\begin{align}
    \vs_A = H_A \hat{\va}, \quad \vy_A = A \vs_A + \lambda_A \vs_A,
    \label{eq_1}
\end{align}
where
$\hat{\va} = \mathbb{E}_n [ \frac{1}{|\mathcal{T}|} \sum_{t \in \mathcal{T}} \va_t(n)] = \mathbb{E}_n \left[ \overline{\va(n)} \right]$
and $\lambda_A$ is the damping term.
Since $A$ is estimated from a minibatch, we compute $A s_A$ without explicitly computing $A$. 
    To be specific, by (\ref{eq_5}) and (\ref{eq_8}), 
    \begin{align}
        A \vs_A
        & = \E_n \left[ \sum_{t \in \mathcal{T}} \va_t(n) \va_t(n)^\top \right] \vs_A
        \nonumber
        \\
        & = \E_n \left[ \sum_{t \in \mathcal{T}} (\va_t(n)^\top \vs_A) \va_t(n) \right].
        \label{eq_14}
    \end{align}

Lastly, we propose to set the damping terms $\lambda_A = \sqrt{|\mathcal{T}|} \sqrt{\lambda}$, $\lambda_G = \frac{1}{\sqrt{|\mathcal{T}|}} \sqrt{\lambda}$ for a given overall damping hyper-parameter $\lambda$, which is shown to be better than setting $\lambda_A = \lambda_G = \sqrt{\lambda}$, which was proposed in \citet{goldfarb2020practical} for fully-connected layers. (See Section \ref{sec_18} in the Appendix for more discussion on this.)

\section{Space and Computational Requirements}

\begin{table*}[t]
  \caption{Storage Requirement}
  \vskip 0.15in
  \label{table_4}
  \centering
  \begin{tabular}{l|cccllllll}
    \hline              
    Algorithm
    & $\mathcal{D} W$ & $\mathcal{D} W \odot \mathcal{D} W$ & $A$ / $H_A$ & $G$ / $H_G$ & Total
    \\
    \hline
    K-BFGS
    & $O(I J |\Delta|)$
    & ---
    & $O(J^2 |\Delta|^2)$
    & $O(I^2)$
    & $O(J^2 |\Delta|^2 + I J |\Delta| + I^2)$
    \\
    K-BFGS(L)
    & $O(I J |\Delta|)$
    & ---
    & $O(J^2 |\Delta|^2)$
    & $O(p I)$
    & $O(J^2 |\Delta|^2 + I J |\Delta| + p I)$
    \\
    KFAC
    & $O(I J |\Delta|)$
    & ---
    & $O(J^2 |\Delta|^2)$
    & $O(I^2)$
    & $O(J^2 |\Delta|^2 + I J |\Delta| + I^2)$
    \\
    Adam
    & $O(I J |\Delta|)$
    & $O(I J |\Delta|)$
    & ---
    & ---
    & $O(I J |\Delta|)$
    \\
    \hline
  \end{tabular}
\end{table*}

\begin{table*}[t]
  \caption{Computation per iteration beyond that required for the minibatch stochastic gradient }
  \vskip 0.15in
  \label{table_5}
  \centering
  \begin{tabular}{l|cccllllll}
    \hline
    Algorithm
    & Additional pass
    & Curvature
    & Step $\Delta W_l$
    \\
    \hline
    K-BFGS
    & $O\left(\frac{m I J |\Delta| |\mathcal{T}|}{T}\right)$
    & $O\left(\frac{m J |\Delta| |\mathcal{T}| + J^2 |\Delta|^2 + m I |\mathcal{T}| + I^2}{T}\right)$
    & $O(I J^2 |\Delta|^2 + I^2 J |\Delta|)$
    \\
    K-BFGS(L)
    & $O\left(\frac{m I J |\Delta| |\mathcal{T}|}{T}\right)$
    & $O\left(\frac{m J |\Delta| |\mathcal{T}| + J^2 |\Delta|^2 + m I |\mathcal{T}| + p I}{T}\right)$
    & $O(I J^2 |\Delta|^2 + p I J |\Delta|)$
    \\
    KFAC
    &
    $O\left(\frac{m I J |\Delta| |\mathcal{T}|}{T_1}\right)$
    &
    $O\left(\frac{m (J^2 |\Delta|^2 + I^2) |\mathcal{T}|}{T_1} + \frac{J^3 |\Delta|^3 + I^3}{T_2}\right)$
    & $O(I J^2 |\Delta|^2 + I^2 J |\Delta|)$
    \\
    Adam
    & ---
    & $O(I J |\Delta|)$
    & $O(I J |\Delta|)$
    \\
    \hline
  \end{tabular}
\end{table*}

In this section, we compare the space and computational requirements of the proposed
K-BFGS and K-BFGS(L) methods with KFAC (see Algorithm \ref{algo_1} in the Appendix) and Adam, which are among the predominant 2nd- and 
1st-order methods, respectively, used to train CNNs. (One can also easily make similar comparison for MLPs.
)

We focus on one convolutional layer, with $J $ input channels, $I$ output channels, kernel size $|\Delta|$, and $|\mathcal{T}|$ spacial locations. Moreover, let $m$ denote the size of minibatches, $p$ denote the number of $(\vs, \vy)$ pairs for L-BFGS, $T$ denote the curvature update frequency for K-BFGS/K-BFGS(L), and
{$T_1$ and $T_2$} denote the frequency of statistics update and inverse update for KFAC, respectively.


From Table \ref{table_4}, one can see that
K-BFGS/K-BFGS(L) requires roughly the same amount of memory as KFAC.
Note that $I$ and $J$ are usually much larger than $|\Delta|$ in CNNs. For example, in VGG16 \cite{simonyan2014very}, $I$ and $J$ can be as large as 512 whereas $|\Delta| = 9$. Hence, as Table \ref{table_4} shows, the memory required by
K-BFGS/K-BFGS(L) is of the same order as that of Adam in terms of $I$ and $J$. 


In Table \ref{table_5}, besides the operations listed, each algorithm also needs to compute the minibatch gradient 
requiring $O(m I J |\Delta| |\mathcal{T}|)$ time. 
(Note that $|\mathcal{T}|$ is usually much larger than $I$ or $J$.)
{
Comparing with K-BFGS-20, K-BFGS improves time complexity due to the usage of "minibatched" Hessian-action BFGS. If K-BFGS-20 were used (with original Hessian-action BFGS), the first term $O\left(\frac{m J |\Delta| |\mathcal{T}|}{T}\right)$ of the "Curvature" column for K-BFGS and K-BFGS(L) would increase to $O\left(\frac{m J^2 |\Delta|^2 |\mathcal{T}|}{T}\right)$.
}
K-BFGS requires considerably less time to compute 
curvature information than KFAC. First,
{K-BFGS} avoids matrix inversion, whose complexity is $O(I^3)$ and $O(J^3 |\Delta|^3)$ (although this is amortized
in KFAC by $\frac{1}{T_2}$ by using the same inverse for $T_2$ iterations). Second,
{K-BFGS} avoids computing the $\Omega$ and $\Gamma$ matrices of KFAC from minibatch data, whose complexity is $m (J^2 |\Delta|^2 + I^2) |\mathcal{T}|$. 
Instead, we directly compute the $(\vs, \vy)$ pairs for $H_A$ and $H_G$, without explicitly forming $A$ or $G$.


  

  

\section{Convergence Results}

In this section, we present convergence results for
a variant of
K-BFGS(L) (specifically, Algorithm \ref{algo_7} in the Appendix), following the framework in \citet{wang2017stochastic}.
For the purpose of simplicity, we assume that all layers are convolutional. (Our results also hold for MLPs or if the model contains both convolutional and fully-connected layers.)

There are several minor difference (described in Section \ref{sec_2} in the Appendix) between Algorithm \ref{algo_7} and our actual implementation of K-BFGS(L). In particular, $D_{P(I)} D_{LM}$ (see Algorithm \ref{algo_6} in the Appendix), rather than $D_{P} D_{LM}$, is used, which leads to the following lemma:


\begin{lemma}
\label{lemma_3}
The output of
{$D_{P(I)} D_{LM}$} satisfies: $\frac{\tilde{\vs}^\top \tilde{\vs}}{\tilde{\vs}^\top \tilde{\vy}} \le \frac{1}{\mu_2}$, $\frac{\tilde{\vy}^\top \tilde{\vy}}{\tilde{\vs}^\top \tilde{\vy}} \le \frac{1}{\mu_3}$, where
{$\mu_3 = \frac{\mu_1}{\mu_2 (1 + 2 \mu_1)}$}.
\end{lemma}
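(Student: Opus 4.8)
The plan is to read off two structural facts from the two damping steps of $D_{P(I)} D_{LM}$ and then reduce both inequalities to elementary algebra. The Levenberg--Marquardt step sets $\tilde\vy = \vy + \mu_2 \tilde\vs$, whence $\tilde\vs^\top\tilde\vy = \tilde\vs^\top\vy + \mu_2\|\tilde\vs\|^2$ and $\|\tilde\vy\|^2 = \|\vy\|^2 + 2\mu_2\,\tilde\vs^\top\vy + \mu_2^2\|\tilde\vs\|^2$. The Powell step --- which in $D_{P(I)}$ damps against a fixed identity-scaled reference matrix $\frac{1}{\mu_2}I$ rather than the running $H$ of Algorithm \ref{algo_5} --- guarantees the curvature lower bound $\tilde\vs^\top\vy \ge \frac{\mu_1}{\mu_2}\|\vy\|^2 \ge 0$; I would verify this by the same two-case argument as for Algorithm \ref{algo_5} (the bound is the defining inequality of the no-clipping branch $\theta_1 = 1$, and in the interior branch the chosen $\theta_1$ forces $\tilde\vs^\top\vy = \mu_1\vy^\top(\frac{1}{\mu_2}I)\vy$ with equality). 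The first inequality is then immediate: since $\tilde\vs^\top\vy \ge 0$ we have $\tilde\vs^\top\tilde\vy \ge \mu_2\|\tilde\vs\|^2$, i.e. $\frac{\tilde\vs^\top\tilde\vs}{\tilde\vs^\top\tilde\vy} \le \frac{1}{\mu_2}$.

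For the second inequality I would substitute the restated Powell bound $\|\vy\|^2 \le \frac{\mu_2}{\mu_1}\,\tilde\vs^\top\vy$ into the expansion of $\|\tilde\vy\|^2$, giving $\|\tilde\vy\|^2 \le (\frac{\mu_2}{\mu_1}+2\mu_2)\,\tilde\vs^\top\vy + \mu_2^2\|\tilde\vs\|^2$. Setting $p := \tilde\vs^\top\vy \ge 0$ and $q := \|\tilde\vs\|^2 \ge 0$, the target ratio is at most $\frac{(\mu_2/\mu_1+2\mu_2)p + \mu_2^2 q}{p + \mu_2 q}$, and I would close it off with the mediant inequality $\frac{a_1+a_2}{b_1+b_2} \le \max\{\frac{a_1}{b_1},\frac{a_2}{b_2}\}$ (valid for positive $b_1,b_2$) applied with $(a_1,b_1) = ((\mu_2/\mu_1+2\mu_2)p,\,p)$ and $(a_2,b_2) = (\mu_2^2 q,\,\mu_2 q)$. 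This yields $\max\{\frac{\mu_2}{\mu_1}+2\mu_2,\ \mu_2\} = \frac{\mu_2(1+2\mu_1)}{\mu_1} = \frac{1}{\mu_3}$, which is the claim. The degenerate cases $p = 0$ or $q = 0$ I would check directly, since there the ratio equals one of the two extreme fractions.

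The main obstacle is pinning down the exact quantitative guarantee of the Powell step in $D_{P(I)}$, namely $\tilde\vs^\top\vy \ge \frac{\mu_1}{\mu_2}\|\vy\|^2$: the cross term $2\mu_2\,\tilde\vs^\top\vy$ and, above all, the raw $\|\vy\|^2$ appearing in $\|\tilde\vy\|^2$ must be absorbed into $\tilde\vs^\top\tilde\vy$, and the precise value of $\mu_3$ is governed entirely by the scaling of the reference matrix against which Powell's damping is performed (damping against $I$ instead would give the different constant $\frac{\mu_1}{1+2\mu_1\mu_2}$). Once this bound is established the remainder is the single mediant estimate above, with the only bookkeeping being to keep $p,q,\mu_1,\mu_2$ nonnegative so that the ratio manipulations stay valid.
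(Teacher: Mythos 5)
Your proposal is correct and follows essentially the same route as the paper's proof: both rest on the Powell guarantee $\tilde{\vs}^\top \vy \ge \frac{\mu_1}{\mu_2} \vy^\top \vy \ge 0$, the expansion $\tilde{\vy}^\top\tilde{\vy} = \vy^\top\vy + 2\mu_2 \tilde{\vs}^\top\vy + \mu_2^2 \tilde{\vs}^\top\tilde{\vs}$, and absorption of all terms into $\tilde{\vs}^\top\tilde{\vy}$. The only (cosmetic) difference is in the last step, where you invoke a mediant inequality on the ratio, while the paper substitutes $\vy = \tilde{\vy} - \mu_2\tilde{\vs}$ into the cross term and uses $\tilde{\vs}^\top\tilde{\vy} \ge \tilde{\vs}^\top\vy$ twice; both yield the identical constant $\frac{1}{\mu_3} = \frac{\mu_2(1+2\mu_1)}{\mu_1}$.
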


Consequently, one can prove the following two lemmas:

\begin{lemma}
\label{lemma_4}

{Suppose that we use $(\vs, \vy)$ for the BFGS update (\ref{eq_13}).}
If $\frac{\vs^\top \vs}{\vs^\top \vy} \le \frac{1}{\mu_2}$,
$\frac{\vy^\top \vy}{\vs^\top \vy} \le \frac{1}{\mu_3}$, then $ \| B^+\| \leq \| B \| + \frac{1}{ \mu_3} $ 
and $\|H^+\|  \leq (1 + \frac{1}{\sqrt{\mu_2 \mu_3}})^2 \|H\| + \frac{1}{\mu_2}$, where $B$ denotes the inverse of $H$. 
\end{lemma}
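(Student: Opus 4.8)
The plan is to bound the two matrices using their standard rank-two BFGS updates, treating $B^+$ via the \emph{direct} update and $H^+$ via the \emph{inverse} update (\ref{eq_13}). Note first that the hypotheses guarantee $\vs^\top \vy \ge \mu_2 \|\vs\|^2 > 0$ (positive curvature, ensured by the $D_{P(I)}D_{LM}$ damping via Lemma \ref{lemma_3}), so $B$ and $H$ stay positive definite. The direct BFGS formula reads
\begin{align*}
B^+ = B - \frac{B \vs \vs^\top B}{\vs^\top B \vs} + \frac{\vy \vy^\top}{\vy^\top \vs}.
\end{align*}

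First I would show that the first two terms do not increase the norm. Writing $B = B^{1/2} B^{1/2}$ and factoring $B^{1/2}$ out on both sides, the bracketed part equals $B^{1/2}(I - P)B^{1/2}$, where $P$ is the orthogonal projection onto the span of $B^{1/2}\vs$. Since $0 \preceq I - P \preceq I$, we get $0 \preceq B - \frac{B\vs\vs^\top B}{\vs^\top B\vs} \preceq B$, so this term has spectral norm at most $\|B\|$. The remaining rank-one term satisfies $\left\| \frac{\vy\vy^\top}{\vy^\top\vs} \right\| = \frac{\vy^\top\vy}{\vs^\top\vy} \le \frac{1}{\mu_3}$ by the second hypothesis, and the triangle inequality then gives $\|B^+\| \le \|B\| + \frac{1}{\mu_3}$.

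Next, for $H^+$ I would apply the triangle inequality and submultiplicativity directly to (\ref{eq_13}):
\begin{align*}
\|H^+\| \le \|I - \rho \vs\vy^\top\|\,\|H\|\,\|I - \rho\vy\vs^\top\| + \rho\|\vs\vs^\top\|.
\end{align*}
Since $(I-\rho\vs\vy^\top)^\top = I - \rho\vy\vs^\top$ and the spectral norm is transpose-invariant, the two outer factors are equal, yielding $\|I-\rho\vs\vy^\top\|^2\|H\|$. I then bound $\|I - \rho\vs\vy^\top\| \le 1 + \rho\|\vs\|\|\vy\| = 1 + \frac{\|\vs\|\|\vy\|}{\vs^\top\vy}$ and rewrite the last quantity as the geometric mean $\sqrt{\frac{\vs^\top\vs}{\vs^\top\vy}\cdot\frac{\vy^\top\vy}{\vs^\top\vy}} \le \frac{1}{\sqrt{\mu_2\mu_3}}$ using both hypotheses. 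Finally $\rho\|\vs\vs^\top\| = \frac{\vs^\top\vs}{\vs^\top\vy} \le \frac{1}{\mu_2}$, and combining yields $\|H^+\| \le \left(1 + \frac{1}{\sqrt{\mu_2\mu_3}}\right)^2\|H\| + \frac{1}{\mu_2}$.

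The only genuinely non-routine step is the norm-nonincreasing bound on the projected part of the direct update; it rests on positive definiteness of $B$, which in turn follows from the positive curvature $\vs^\top\vy>0$ supplied by the damping. Everything else is the triangle inequality together with the two ratio bounds in the statement, the key observation being that the cross term in the inverse update contributes precisely a geometric mean of those two ratios.
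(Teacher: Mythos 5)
Your proof is correct and follows essentially the same route as the paper's: the direct rank-two update plus triangle inequality for $\|B^+\|$, and the inverse update (\ref{eq_13}) with transpose-invariance of the spectral norm and the geometric-mean bound $\frac{\|\vs\|\|\vy\|}{\vs^\top\vy} \le \frac{1}{\sqrt{\mu_2\mu_3}}$ for $\|H^+\|$. The only difference is that you explicitly justify $\bigl\| B - \frac{B\vs\vs^\top B}{\vs^\top B\vs} \bigr\| \le \|B\|$ via the $B^{1/2}$-projection argument, a step the paper's proof asserts without comment.
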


\begin{lemma}
\label{lemma_5}
In Algorithm \ref{algo_7}, for a given layer index $l = 1, ..., L$, there exist two positive constants $\underline{\kappa}_G^l$ and $\bar{\kappa}_G^l$, such that $\underline{\kappa}_G^l I \preceq H_G^l(k)  \preceq  \bar{\kappa}_G^l I$, $\forall k$. 
\end{lemma}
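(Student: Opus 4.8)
The plan is to exploit the limited-memory structure of Algorithm~\ref{algo_7}: at every iteration $k$, the matrix $H_G^l(k)$ is not carried forward indefinitely but is instead reconstructed by applying at most $p$ BFGS updates \eqref{eq_13} to a fixed initial matrix $H_0^l$, where $H_0^l$ is a positive multiple of the identity, so that $\underline{h}\, I \preceq H_0^l \preceq \bar{h}\, I$ for constants $0 < \underline{h} \le \bar{h}$ independent of $k$. Because the number of updates $p$ is fixed, every bound I derive will be a constant that does not depend on $k$, which is exactly what the lemma demands.

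First I would verify that each $(\vs,\vy)$ pair fed into the $H_G^l$ updates satisfies the hypotheses of Lemma~\ref{lemma_4}. In Algorithm~\ref{algo_7} these pairs are the output of $D_{P(I)} D_{LM}$, so Lemma~\ref{lemma_3} gives directly $\frac{\tilde{\vs}^\top \tilde{\vs}}{\tilde{\vs}^\top \tilde{\vy}} \le \frac{1}{\mu_2}$ and $\frac{\tilde{\vy}^\top \tilde{\vy}}{\tilde{\vs}^\top \tilde{\vy}} \le \frac{1}{\mu_3}$, which are precisely the two inequalities required by Lemma~\ref{lemma_4}. Thus Lemma~\ref{lemma_4} applies verbatim to every single update building $H_G^l(k)$.

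Next I would iterate the two recursions from Lemma~\ref{lemma_4}. Writing $B_G^l(k) := (H_G^l(k))^{-1}$ and setting $c := (1 + \tfrac{1}{\sqrt{\mu_2\mu_3}})^2$ and $d := \tfrac{1}{\mu_2}$, applying the $\|H^+\|$ bound at most $p$ times from $\|H_0^l\| \le \bar{h}$ yields $\|H_G^l(k)\| \le c^p \bar{h} + d\,\frac{c^p-1}{c-1} =: \bar{\kappa}_G^l$ (the $p$-fold bound dominates since $c > 1$), giving $H_G^l(k) \preceq \bar{\kappa}_G^l I$. Similarly, applying the $\|B^+\|$ bound at most $p$ times from $\|B_0^l\| \le 1/\underline{h}$ yields $\|B_G^l(k)\| \le \tfrac{1}{\underline{h}} + \tfrac{p}{\mu_3}$; since $\|B_G^l(k)\| \le \beta$ forces $\lambda_{\min}(H_G^l(k)) \ge 1/\beta$, this gives $H_G^l(k) \succeq \underline{\kappa}_G^l I$ with $\underline{\kappa}_G^l := (\tfrac{1}{\underline{h}} + \tfrac{p}{\mu_3})^{-1}$. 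Both constants are positive and independent of $k$, completing the argument.

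The main obstacle, and the only genuinely delicate point, is confirming that the recursion is applied a fixed number of times rather than growing with $k$. Since $c > 1$, the $\|H^+\|$ recursion would blow up geometrically if iterated $k$ times, as it would for full-memory BFGS; the uniform upper bound survives precisely because the limited-memory scheme rebuilds $H_G^l(k)$ from the fixed initial matrix using only the $p$ most recent curvature pairs. I would therefore state explicitly that $H_0^l$ is held between fixed positive multiples of the identity for all $k$ (the relevant detail of Algorithm~\ref{algo_7}), so that $\bar{h}$ and $\underline{h}$ can be chosen uniformly; verifying this from the precise initialization in the appendix is the one place where care is needed.
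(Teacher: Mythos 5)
Your proposal is correct and follows essentially the same route as the paper's proof: use Lemma \ref{lemma_3} to verify the hypotheses of Lemma \ref{lemma_4}, then apply its two recursions at most $p$ times starting from the fixed L-BFGS initialization (which in Algorithm \ref{algo_7} is exactly $H_0 = \lambda_G^{-1} I$, so your $\underline{h} = \bar{h} = \lambda_G^{-1}$), yielding the same constants $\underline{\kappa}_G^l = (\lambda_G + p/\mu_3)^{-1}$ and $\bar{\kappa}_G^l = \lambda_G^{-1}\hat{\mu}^p + \frac{\hat{\mu}^p-1}{\hat{\mu}-1}\frac{1}{\mu_2}$. Your emphasis on the limited-memory structure being what keeps the bounds $k$-independent is precisely the point the paper's proof relies on.
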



To apply  the convergence results in \citet{wang2017stochastic}
to Algorithm \ref{algo_7}, {we need to have that $H_A^l$, and hence that $H_l = H_A^l  \otimes H_G^l$, is bounded above and below by positive definite matrices, in addition to $H_G^l$. For this purpose and for satisfying other requirements needed to apply the theory in \citet{wang2017stochastic}, we make the following assumptions:}

\begin{assumption}
$f: \mathbb{R}^{d} \rightarrow \mathbb{R}$ is continuously differentiable. $f(\theta)$ is lower bounded by a real number $f^{\text {low }}$ for any $\theta \in \mathbb{R}^{d}$. $\nabla f$ is globally Lipschitz continuous with Lipschitz constant $L$, i.e., for any $\theta, \theta' \in \mathbb{R}^{d}$,
$\| \nabla f(\theta)-\nabla f(\theta') \| \leq L \| \theta - \theta' \|$. 
\label{assumption_1}
\end{assumption}

\begin{assumption}
For every iteration $k,$ we have
\begin{align*}
    & a) \ \mathbb{E}_{\xi_{k}}\left[g(\theta_{k}, \xi_{k})\right]=\nabla f(\theta_{k}),
    \\
    & b) \ \mathbb{E}_{\xi_{k}}\left[\left\|g(x_{k}, \xi_{k})-\nabla f(\theta_{k})\right\|^{2}\right] \leq \sigma^{2},
\end{align*}
where {$g$ is the minibatch gradient and} $\sigma>0$ is the noise level of the gradient estimation, and $\xi_{k}, k=1,2, \ldots$ are independent samples, and for a given $k$ the random variable $\xi_{k}$ is independent of $\left\{ \theta_{j} \right\}_{j=1}^{k}$.
\label{assumption_2}
\end{assumption}

\begin{assumption}
The inputs $a_{j,t}^l$'s to any layers are bounded, i.e. $\exists \varphi > 0$ s.t. $\forall l, j, t, |a_{j,t}^l| \le \varphi$. 
\label{assumption_4}
\end{assumption}

Note that AS. \ref{assumption_4} is relatively mild, in the sense that it is fulfilled if the activation functions of the model are all bounded (e.g. sigmoid, tanh, binary step), or some appropriate "normalization" is performed before the data are fed into each layer.

We now show that our block-diagonal approximation to Hessian is bounded below and above by positive definite matrices in Lemma \ref{lemma_1}, and  after that, applying Theorem 2.8 in \citet{wang2017stochastic} we obtain our main convergence result, Theorem \ref{thm_3}.
The complete proofs of all of the lemmas and the theorem in this section are deferred to Sec \ref{sec_4} in the Appendix.





\begin{lemma}
For Algorithm \ref{algo_7}, under the assumption AS. \ref{assumption_4}, 
(i)
$\hat{A}_l \preceq (J_l |\Delta| \varphi^2+1) |\mathcal{T}^l| I$, $\forall l$, and
\\
(ii) there exist two positive constants $\underline{\kappa}$ and $\bar{\kappa}$ , such that $\underline{\kappa} I  \preceq  H =  \text{diag} \{ H_1, ..., H_L \}  \preceq  \bar{\kappa} I$.
\label{lemma_1}
\end{lemma}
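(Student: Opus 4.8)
The plan is to treat part (i) as a direct spectral bound on the rank-one summands and then use it as the key ingredient for bounding $H_A^l$ in part (ii); the bounds on $H_G^l$ come for free from Lemma~\ref{lemma_5}, and the two factors are combined through the eigenvalue structure of Kronecker products. For part (i), I would bound each summand of $\hat A_l$ individually: by (\ref{eq_5}) each $A_{t,t}=\va_t\va_t^\top$ is rank one, so $A_{t,t}\preceq\|\va_t\|^2 I$. By AS.~\ref{assumption_4} every activation entry satisfies $|a_{j,t+\delta}|\le\varphi$, and since $\va_t$ consists of $J_l|\Delta|$ such entries together with the homogeneous coordinate $1$, we get $\|\va_t\|^2\le J_l|\Delta|\varphi^2+1$, hence $A_{t,t}\preceq(J_l|\Delta|\varphi^2+1)I$. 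Summing this uniform bound over the $|\mathcal{T}^l|$ spatial locations, and noting that averaging over the data points in the minibatch preserves a uniform PSD upper bound, yields $\hat A_l\preceq(J_l|\Delta|\varphi^2+1)|\mathcal{T}^l|\,I$.

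For part (ii), the $H_G^l$ side is immediate from Lemma~\ref{lemma_5}, so the work is to produce uniform bounds $\underline\kappa_A^l I\preceq H_A^l\preceq\bar\kappa_A^l I$. Here I would exploit the Hessian-action structure (\ref{eq_1}): writing $M_l:=\hat A_l+\lambda_A I$, the pair becomes $\vy_A=M_l\vs_A$ with $M_l\succeq\lambda_A I$ positive definite. This immediately gives $\vs_A^\top\vy_A\ge\lambda_A\|\vs_A\|^2$, i.e. $\frac{\vs_A^\top\vs_A}{\vs_A^\top\vy_A}\le\frac{1}{\lambda_A}$, and, since $M_l^2\preceq\lambda_{\max}(M_l)M_l$ for symmetric PSD $M_l$, also $\frac{\vy_A^\top\vy_A}{\vs_A^\top\vy_A}=\frac{\vs_A^\top M_l^2\vs_A}{\vs_A^\top M_l\vs_A}\le\lambda_{\max}(M_l)$. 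By part (i), $\lambda_{\max}(M_l)\le(J_l|\Delta|\varphi^2+1)|\mathcal{T}^l|+\lambda_A=:C_l$. Thus the two curvature conditions required by Lemma~\ref{lemma_4} hold with $\mu_2=\lambda_A$ and $\mu_3=1/C_l$, uniformly over all iterations.

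With these conditions in hand, I would iterate Lemma~\ref{lemma_4} over the bounded number of BFGS updates used in the limited-memory scheme of Algorithm~\ref{algo_7}: starting from a fixed bounded initialization $H_0=\gamma I$ and applying at most $p$ updates, the recursion $\|H^+\|\le(1+\tfrac{1}{\sqrt{\mu_2\mu_3}})^2\|H\|+\tfrac{1}{\mu_2}$ gives a finite upper bound $\bar\kappa_A^l$ on $\|H_A^l\|$, while $\|B^+\|\le\|B\|+\tfrac{1}{\mu_3}$ (with $B=(H_A^l)^{-1}$) gives a finite upper bound on $\|(H_A^l)^{-1}\|$, i.e. a positive lower bound $\underline\kappa_A^l$. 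Finally, since the eigenvalues of $H_l=H_A^l\otimes H_G^l$ are the pairwise products of those of $H_A^l$ and $H_G^l$, we obtain $\underline\kappa_A^l\underline\kappa_G^l I\preceq H_l\preceq\bar\kappa_A^l\bar\kappa_G^l I$; as $H=\mathrm{diag}\{H_1,\dots,H_L\}$ has spectrum equal to the union of the block spectra, setting $\underline\kappa=\min_l\underline\kappa_A^l\underline\kappa_G^l$ and $\bar\kappa=\max_l\bar\kappa_A^l\bar\kappa_G^l$ completes the proof.

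The main obstacle I anticipate is the uniform-in-$k$ bound on $H_A^l$: Lemma~\ref{lemma_4} only controls the growth per update with a multiplicative factor that may exceed $1$, so a uniform bound requires that each iteration applies only a fixed, bounded number of updates from a bounded initialization, which is precisely the limited-memory structure of Algorithm~\ref{algo_7}. Verifying that the Hessian-action pairs satisfy the hypotheses of Lemma~\ref{lemma_4} uniformly over all iterations — which is exactly where part (i) is needed, through the bound $\lambda_{\max}(M_l)\le C_l$ — is the crux that makes this iteration legitimate.
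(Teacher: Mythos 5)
Your part (i) is correct and matches the paper's argument exactly (rank-one bound on each $A_{t,t}$, AS.~\ref{assumption_4} to bound $\|\va_t\|^2$ by $J_l|\Delta|\varphi^2+1$, sum over $\mathcal{T}^l$, average over the minibatch). The gap is in part (ii), in how you bound $H_A^l$. You argue as if Algorithm~\ref{algo_7} built $H_A^l$ by limited-memory Hessian-action BFGS, and you then iterate Lemma~\ref{lemma_4} over ``the bounded number of BFGS updates used in the limited-memory scheme.'' But Algorithm~\ref{algo_7} does no such thing for $H_A^l$: it is one of the explicitly listed differences from the practical implementation that $H_A^l$ is computed by exact inversion, $H_A^l = (\hat{A}_l + \lambda_A^l I)^{-1}$, recomputed from the current minibatch at every iteration. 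The limited-memory structure you lean on exists only for $H_G^l$ (which is precisely why Lemma~\ref{lemma_5} can be proved by iterating Lemma~\ref{lemma_4} at most $p$ times from the fixed initialization $\lambda_G^{-1}I$). In the practical Algorithm~\ref{algo_8}, $H_A^l$ is updated by \emph{full} BFGS, one update per curvature step accumulating over all iterations $k$, so your own stated objection applies there: iterating Lemma~\ref{lemma_4} gives bounds like $\|B_k\|\le\|B_0\|+k/\mu_3$ and a multiplicative factor exceeding $1$ on $\|H_k\|$, which are not uniform in $k$. Thus your argument addresses a construction the analyzed algorithm does not use, and would not close the uniformity gap for the construction the implemented algorithm does use.

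The repair is immediate, and is exactly what the paper does: since $\hat{A}_l$ is PSD, $\lambda_A^l I \preceq \hat{A}_l + \lambda_A^l I$, and by your part (i), $\hat{A}_l + \lambda_A^l I \preceq \bigl((J_l|\Delta|\varphi^2+1)|\mathcal{T}^l| + \lambda_A^l\bigr) I$; inverting gives $\bigl((J_l|\Delta|\varphi^2+1)|\mathcal{T}^l| + \lambda_A^l\bigr)^{-1} I \preceq H_A^l \preceq (\lambda_A^l)^{-1} I$, uniformly in $k$, with no BFGS analysis needed. Combining this with Lemma~\ref{lemma_5} via the Kronecker-product eigenvalue structure and taking the min/max over the $L$ diagonal blocks---which you do correctly---finishes the proof. (Your intermediate inequalities $\vs_A^\top\vy_A\ge\lambda_A\|\vs_A\|^2$ and $\vy_A^\top\vy_A/\vs_A^\top\vy_A\le\lambda_{\max}(M_l)$ are correct pieces of mathematics, but they bound the curvature pairs of a BFGS process that Algorithm~\ref{algo_7} does not perform for $H_A^l$.)
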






\begin{theorem}

Suppose that assumptions AS.\ref{assumption_1}, AS.\ref{assumption_2}, AS.\ref{assumption_4} hold for $\{ \theta_{k} \}$ generated by Algorithm \ref{algo_7}.
We also assume that $\alpha_{k}$ is specifically chosen as
$\alpha_{k}=\frac{\underline{\kappa}}{L \bar{\kappa}^{2}} k^{-\beta}$
with $\beta \in(0.5,1)$.
Then
\begin{align*}
    \frac{1}{K} \sum_{k=1}^{K} \mathbb{E}\left[\left\|\nabla f(\theta_{k})\right\|^{2}\right]
    \leq \frac{2 L\left(M_{f}-f^{l o w}\right) \bar{\kappa}^{2}}{\underline{\kappa}^{2}} K^{\beta-1}
    \\
    +\frac{\sigma^{2}}{(1-\beta) m} (K^{-\beta} - K^{-1}),
\end{align*}
where $K$ denotes the iteration number and $M_f$ is a positive constant. Moreover, for a given $\epsilon \in(0,1)$, to guarantee that $\frac{1}{K} \sum_{k=1}^{K} \mathbb{E}\left[\left\|\nabla f(\theta_{k})\right\|^{2}\right]<\epsilon,$ the number of iterations $K$ needed is at most $O\left(\epsilon^{-\frac{1}{1-\beta}}\right)$.

\label{thm_3}
\end{theorem}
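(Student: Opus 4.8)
The plan is to cast Algorithm~\ref{algo_7} as an instance of the generic stochastic quasi-Newton scheme analyzed in \citet{wang2017stochastic}, whose update has the form $\theta_{k+1} = \theta_k - \alpha_k H_k g(\theta_k,\xi_k)$ with a symmetric positive-definite preconditioner $H_k$, and then to invoke their Theorem~2.8. To do so I must verify exactly three things: (a) the smoothness and lower-boundedness of $f$ together with an unbiased, bounded-variance gradient oracle, which are precisely AS.\ref{assumption_1} and AS.\ref{assumption_2}; (b) a uniform spectral sandwich $\underline{\kappa} I \preceq H_k \preceq \bar{\kappa} I$ for every $k$ with constants independent of $k$; and (c) that the preconditioner $H_k$ scaling the step is statistically independent of the current sample $\xi_k$ given $\theta_k$, so that $\mathbb{E}_{\xi_k}[H_k g(\theta_k,\xi_k)] = H_k \nabla f(\theta_k)$.

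For (b) I would lean entirely on the chain of lemmas already in place. Lemma~\ref{lemma_3} bounds the curvature pair produced by $D_{P(I)}D_{LM}$, Lemma~\ref{lemma_4} shows the BFGS/L-BFGS recursion propagates such bounds to $H_G^l$, and Lemma~\ref{lemma_5} delivers constants with $\underline{\kappa}_G^l I \preceq H_G^l(k) \preceq \bar{\kappa}_G^l I$. Lemma~\ref{lemma_1}(i) uses AS.\ref{assumption_4} to bound $\hat{A}_l$ from above, which together with the damping $\lambda_A$ in (\ref{eq_1}) yields a corresponding two-sided bound on $H_A^l$. Since $H_l = H_A^l \otimes H_G^l$ and the eigenvalues of a Kronecker product are products of eigenvalues, a two-sided bound on each factor gives a two-sided bound on $H_l$, and taking the block-diagonal $H = \mathrm{diag}\{H_1,\dots,H_L\}$ preserves it with $\underline{\kappa} = \min_l \underline{\kappa}_A^l \underline{\kappa}_G^l$ and $\bar{\kappa} = \max_l \bar{\kappa}_A^l \bar{\kappa}_G^l$; this is exactly Lemma~\ref{lemma_1}(ii).

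With these hypotheses in hand, Theorem~2.8 of \citet{wang2017stochastic} yields a bound of the form $\frac{1}{K}\sum_{k=1}^K \mathbb{E}[\|\nabla f(\theta_k)\|^2] \le \frac{2L(M_f - f^{\mathrm{low}})\bar{\kappa}^2}{\underline{\kappa}^2} K^{\beta-1} + \frac{\sigma^2}{(1-\beta)m}(K^{-\beta} - K^{-1})$ once the step size is fixed to $\alpha_k = \frac{\underline{\kappa}}{L\bar{\kappa}^2}k^{-\beta}$. Here I would track the constants carefully: the factor $\bar{\kappa}^2/\underline{\kappa}^2$ comes from the spectral ratio of $H_k$, the $1/m$ in the variance term reflects that the per-sample variance $\sigma^2$ is reduced by the size-$m$ minibatch under i.i.d. sampling, and the two $K$-dependent terms arise from summing $\alpha_k$ and $\alpha_k^2$ over the diminishing schedule, where $\beta \in (0.5,1)$ guarantees $\sum_k \alpha_k = \infty$ and $\sum_k \alpha_k^2 < \infty$. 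The complexity claim then follows since the dominant term is $K^{\beta-1}$, so forcing it below $\epsilon$ requires $K = O(\epsilon^{-1/(1-\beta)})$.

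I expect the main obstacle to be point (c): making precise that, in Algorithm~\ref{algo_7}, the preconditioner $H_k$ is formed from past iterates and past samples only, so that it is measurable with respect to $\{\theta_j\}_{j\le k}$ and independent of $\xi_k$ (AS.\ref{assumption_2} posits this independence for $\xi_k$, but the algorithm's update ordering must be shown to respect it). The remaining subtlety is purely bookkeeping---propagating the explicit $\underline{\kappa}, \bar{\kappa}$ through the generic constants of \citet{wang2017stochastic} and confirming the minibatch variance reduction---rather than anything conceptually new, since the substantive spectral bound has already been isolated in Lemma~\ref{lemma_1}.
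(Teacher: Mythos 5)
Your proposal follows essentially the same route as the paper's proof: it casts Algorithm \ref{algo_7} as an instance of the SQN framework of \citet{wang2017stochastic}, verifies their assumptions (the spectral sandwich via Lemma \ref{lemma_1}, the independence of the preconditioner from the current minibatch---which holds because $H_A^l$ and $H_G^l$ used in the step at iteration $k$ were built from minibatch $M_{k-1}$---and the identification of AS.\ref{assumption_1}--AS.\ref{assumption_2} with their AS.1--AS.2), and then invokes their Theorem 2.8. One minor correction: in Algorithm \ref{algo_7} the two-sided bound on $H_A^l$ comes from the exact inversion $H_A^l = (\hat{A}_l + \lambda_A^l I)^{-1}$ (see (\ref{eq_9})) rather than from the Hessian-action update (\ref{eq_1}), but this only makes that step simpler.
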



Theorem \ref{thm_3} shows that Algorithm \ref{algo_7} converges to a stationary point for a (possibly) non-convex function $f$.
We note that under
{very similar assumptions,} Theorems 2.5 and 2.6 in \citet{wang2017stochastic} also hold for Algorithm \ref{algo_7}.

\section{Numerical Results}
\label{sec_9}

In this section, we describe two sets of experiments, namely, three MLP autoencoder problems and four CNN problems, comparing our proposed methods to other relevant methods (see Sec \ref{sec_10} for the detailed description of them) mentioned in our paper.

The results reported in the tables and plots are all based on runs using 5 different random seeds and the tuned best hyper-parameters (HPs) from a grid search specified below. The values reported in the tables and the solid curves depicted in the plots are derived from the averages of the 5 runs, while the shaded areas in the plots depict the $\pm \text{std}/\sqrt{5}$ range for the runs.
All experiments were run on a machine with 8 Xeon Gold 6248 CPUs with one Nvidia V100 GPU.

\subsection{Comparison with \citet{goldfarb2020practical}}
\label{sec_17}

\begin{table}[b]
  \caption{
  Average of training loss achieved using 5 different random seed with best HP values. The dagger sign ($\dagger$) denotes that the curvature update frequency $T = 20$
  }
  \label{table_1}
  \centering
  \begin{tabular}{llll}
    \toprule
     & MNIST & FACES & CURVES \\
    \midrule
    K-BFGS & 51.60 & 5.00 & 55.46
    \\
    $\text{K-BFGS}^{\dagger}$ & 52.01 & 4.62 & 55.06
    \\
    K-BFGS(L)
    & 51.53 & 4.83 & 55.31
    \\
    K-BFGS-20 & 52.38 & 5.46 & 56.00
    \\
    K-BFGS(L)-20 & 54.27 & 4.92 & 55.94 
    \\
    KFAC  & 51.33 & 4.75 & 54.84
    \\
    Adam & 52.76 & 5.33 & 55.24
    \\
    SGD-m & 54.75 & 6.47 & 55.97
    \\
    \bottomrule
  \end{tabular}
\end{table}

\begin{table}[b]
  \caption{
  Average of validation classification accuracy (\%) achieved using 5 different random seeds with best HP values
  }
  \label{table_7}
  \centering
  \begin{tabular}{lllll}
    \toprule
    Dataset & \multicolumn{2}{c}{CIFAR10} & \multicolumn{2}{c}{CIFAR100}
    \\
    Model & VGG16 & ResNet32 & VGG16 & ResNet32
    \\
    \midrule
    K-BFGS & 94.28 & 93.45 & 75.65 & 71.46
    \\
    KFAC & 94.44 & 93.44 & 76.32 & 71.92
    \\
    Adam & 94.28 & 93.35  & 75.64  & 70.35
    \\
    SGD-m & 94.14 & 93.13 & 75.41 & 70.18
    \\
    \bottomrule
  \end{tabular}
\end{table}

Our first set of experiments are on three MLP autoencoder problems with
MNIST \citep{lecun1998gradient}, FACES, and CURVES \citep{hinton2006reducing} datasets, that have become standard for testing the  performance of algorithms to train DNNs, (e.g., see \citet{goldfarb2020practical}). 
We tested our proposed K-BFGS and K-BFGS(L) methods, their counterpart in \citet{goldfarb2020practical} (i.e., K-BFGS-20 and K-BFGS(L)-20), as well as three SOTA methods,
SGD with momentum (SGD-m), Adam, and KFAC. See Section \ref{sec_15} in the Appendix for the model {architecture} and dataset details. 

Minibatches of size 1000 were used for all three problems, as in \citet{goldfarb2020practical}. 
Each algorithm was run for a fixed amount of time for each problem (500 seconds for MNIST and CURVES, 2000 seconds for FACES). For all Kronecker-factored QN method, we set $T = 1$, 
except the one denoted as $\text{K-BFGS}^{\dagger}$, which used $T =20$. For KFAC, we set $T_1 = 1$ and $T_2 = 20$. 
These settings are exactly the same as those in \citet{goldfarb2020practical}.

As we are primarily interested in optimization performance in these experiments, we conducted a grid search on two hyper-parameters (HPs), namely, learning rate and damping, for all methods (only learning rate for SGD-m), and selected the best HP values that achieved the smallest loss on the training set. (See Section \ref{sec_15} in the Appendix for the searching range and best HP values selected.) 
Finally, we ran each algorithm with their best HPs, using 5 different random seeds, and reported the average loss in Table \ref{table_1}. (See Figures \ref{fig_3} to \ref{fig_5} in the Appendix for the training curves.)


From Table \ref{table_1}, we can clearly see that K-BFGS and K-BFGS(L) consistently outperformed their counterparts in \citet{goldfarb2020practical}, justifying the effectiveness of the generic improvements we incorporated in K-BFGS. (See Sec \ref{sec_16} in the Appendix for a more complete ablation study.)
Moreover, K-BFGS and K-BFGS(L) also performed better than the 1st-order methods in most cases except for Adam on CURVES. Lastly, our proposed methods performed similarly to KFAC, particularly when amortization was used (see $\text{K-BFGS}^{\dagger}$). Note that the KFAC baseline that we implemented was better than the one in \citet{goldfarb2020practical}, since it splits the overall damping term adaptively (see Line \ref{line_11}  of Algorithm \ref{algo_1} in the Appendix), rather than simply setting $\pi_l = 1$, which turned out to be an important factor for KFAC.




\subsection{CNNs: Generalization Performance}

\begin{figure}[t]
  \centering
    \includegraphics[width=0.49\textwidth,height=8cm]{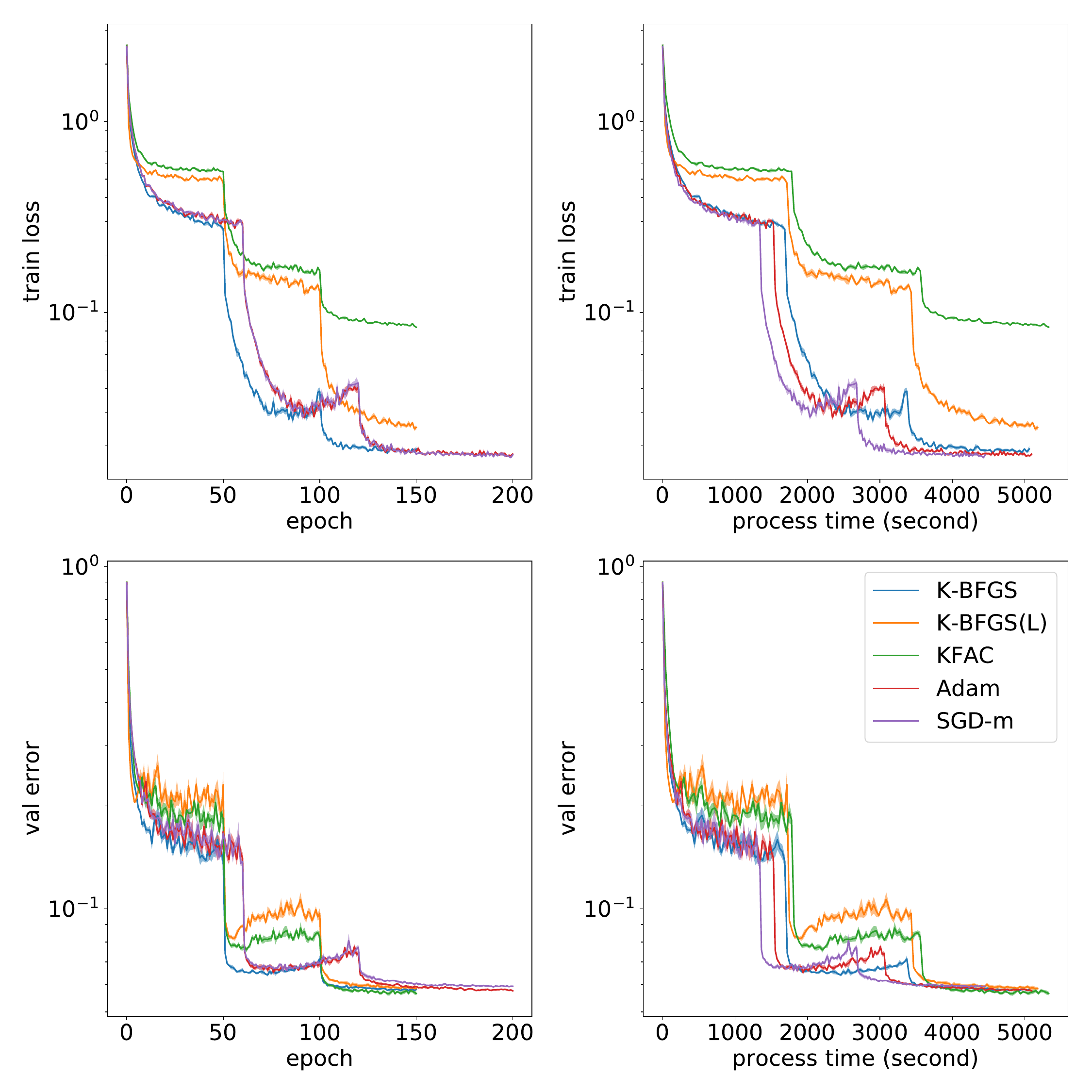}
  \caption{
   Training cross entropy loss (the upper row) and validation classification error (the lower row) against number of epochs (the left column) and process time (the right column) of K-BFGS, K-BFGS(L), KFAC, Adam, and SGD-m on VGG16 with CIFAR10. 
  }
  \label{fig_6}
\end{figure}

We tested K-BFGS,
K-BFGS(L),
KFAC, Adam, and SGD-m on two CNN models that have been found to be effective, namely,
VGG16 \cite{simonyan2014very} and ResNet32 \cite{he2016deep}.
We experimented on both models, using two datasets, CIFAR-10
and CIFAR-100
\cite{krizhevsky2009learning}, each of which includes 50,000 training samples and 10,000 testing samples (we view them as the validation set in our experiments). For both datasets, we applied the data augmentation techniques in \citet{NIPS2012_c399862d}, including random horizontal flip and random crop. (See Sec \ref{sec_5} in the Appendix for more details about the experimental set-up.) The above model/dataset choices have been used and endorsed in many papers, e.g. \citet{zhang2018three,choi2019empirical,ren2021tensor}.

Minibatches of size 128 were used for all experiments. 
For 
SGD-m and Adam, we ran the algorithms for 200 epochs and decay the learning rate by a factor of 0.1 every 60 epochs, which has been shown to be an effective learning rate schedule for these 1st-order methods on these problems. For 
K-BFGS, K-BFGS(L), and KFAC, we ran the algorithms for 150 epochs and decay the learning rate by a factor of 0.1 every 50 epochs, so that their overall running time is approximately the same as that of the 1st-order methods. For K-BFGS, we set the curvature update frequency $T = 20$. For KFAC, we set $T_1 = 10$ and $T_2 = 100$, as in \citet{zhang2018three}.

As we are interested in generalization performance in these experiments, we incorporated the weight decay technique (see Sec \ref{sec_5} in the Appendix) and  
conducted a grid search on three hyper-parameters (HPs), namely, initial learning rate, weight decay factor, and damping for all methods (only initial learning rate and weight decay for SGD-m). Then, we selected HP values that achieved the largest classification accuracy on the validation set (the grid search ranges and the best HP values so determined, are listed in Sec \ref{sec_5} in the Appendix), and reported the average classification accuracy on the validation sets in Table \ref{table_7}.
(See Figure \ref{fig_6} for the training and validation curves on VGG16+CIFAR10 and Figures \ref{fig_7}, \ref{fig_8}, and \ref{fig_9} in the Appendix for the others.)




Results in Table \ref{table_7} indicate that K-BFGS\footnote{Results in Table \ref{table_7} do not include K-BFGS(L) because it consistently underperformed K-BFGS. See Figure \ref{fig_6} and Figures \ref{fig_7}, \ref{fig_8}, and \ref{fig_9} in the Appendix for K-BFGS(L) results.} clearly outperformed Adam and SGD-m in terms of generalization, with the exception that K-BFGS and Adam achieved the same accuracy on VGG16+CIFAR10. Admittedly, KFAC achieved better accuracy than K-BFGS in 3 out of the 4 problems, but the gap is relatively small.




Finally, by comparing the process times reported in Figures 
\ref{fig_6} as well as Figures \ref{fig_7}, \ref{fig_8}, and \ref{fig_9} in the Appendix,
we can see that the per-iteration time of K-BFGS is only about 1/3 more than it is for the 1st-order methods. Moreover, 
the per-iteration time of K-BFGS (with $T=20$) is roughly the same as KFAC (with $T_2 = 100$), which demonstrates the effectiveness of our QN approach.

\section{Conclusion}

In this paper, we proposed a new class of Kronecker-factored quasi-Newton methods that are applicable to both MLP and CNN models, and that substantially improve upon the methods described in \citet{goldfarb2020practical}. We believe that our new methods are the first ones within the scope of quasi-Newton methods that use Kronecker-factored curvature approximations and  are practical for training CNNs.

With extensive numerical experiments, our new methods are shown to be better than the ones in \citet{goldfarb2020practical}. On several standard CNN models, K-BFGS outperforms SOTA first-order methods and performs similarly to KFAC.

\clearpage

\bibliography{references.bib}
\bibliographystyle{icml2022}

\clearpage

\appendix
\onecolumn

\section{Pseudo-code and Implementation Details for K-BFGS / K-BFGS(L)}
\label{sec_14}

\subsection{Pseudo-code for K-BFGS / K-BFGS(L)}

\begin{algorithm}[h]
    \caption{Pseudo-code for K-BFGS / K-BFGS(L)
    }
    \label{algo_8}
    \begin{algorithmic}[1]
    
    
    
    \REQUIRE Given learning rates $\{ \alpha_k \}$, damping value $\lambda$, curvature update frequency $T$,
    batch size $m$
    
    \STATE $\mu_1 = 0.2$, $\beta = 0.9$
    
    \STATE $\lambda_A^l = \sqrt{|\mathcal{T}^l|} \sqrt{\lambda}$, $\lambda_G^l = \frac{1}{\sqrt{|\mathcal{T}^l|}} \sqrt{\lambda}$ ($l = 1, ..., L$) 
    \COMMENT{$\mathcal{T}^l$ denotes the sets of spatial locations in layer $l$}
    
    \STATE $\widehat{\mathcal{D} W_l} = 0$, $A_l = \mathbb{E}_n \left[ \sum_{t \in \mathcal{T}} \va_t^{l}(n) \va_t^{l}(n)^\top \right]$,
    {$H_A^l = (A_l + \lambda_A^l I_A)^{-1}$},
    {$H_G^l = (\lambda_G^l)^{-1} I$}, $\vs_G^l = \vy_G^l = 0$ ($l = 1, ..., L$)
    \COMMENT{Initialization}
    
    \FOR{$k = 1, 2, ...$}
    
    \STATE Sample mini-batch $M_k$ of size $m$
    \alglinelabel{line_12}

    \STATE Perform a forward-backward pass over $M_k$ to compute stochastic gradient $\widetilde{\mathcal{D} W_l}$ ($l = 1, ..., L$)
    
    \FOR{$l = 1, ..., L$}
    
        \STATE
        {$\widehat{\mathcal{D} W_l} = \beta \widehat{\mathcal{D} W_l} + \widetilde{\mathcal{D} W_l}$}
        
        \STATE $p_l = H_G^l \widehat{\mathcal{D} W_l} H_A^l$
        \COMMENT{if L-BFGS is used for $H_G^{l}$, it is initialized as
        $\lambda_G^{-1} I$}
        
        \STATE $W_l = W_l - \alpha_k p_l$
        \alglinelabel{line_14}
    \ENDFOR
    \alglinelabel{line_13}
    
    \IF{$k \equiv 0 \pmod{T}$}
    
    \STATE Perform another forward-backward pass over $M_k$ to compute
    {$\overline{\vh_l^+}$ and $\overline{\mathcal{D} \vh_l^+}$} ($l = 1, \ldots, L$)
    
    \FOR{$l = 1, ..., L$}
        
        \STATE \COMMENT{Update $H_A^l$ by BFGS}
    
    
    

        \STATE
        $\vs_A^l = H_A^l \widetilde{\overline{\va_l}}$,
        {$\vy_A^l = \widetilde{A_l} \vs_A^l + \lambda_A^l \vs_A^l$} using (\ref{eq_14})
        
        \STATE
        {
        Use BFGS updating (\ref{eq_13}) with $(\vs_A^l, \vy_A^l)$ to update $H_A^l$
        }

        \STATE \COMMENT{Update $H_G^l$ by
        BFGS or L-BFGS}
        
        \STATE 
        $\vs_G^l = \beta \vs_G^l + (1-\beta) \left( \widetilde{\overline{\vh^+_l}} - \widetilde{\overline{\vh_l}} \right), \vy_G^l = \beta \vy_G^l + (1-\beta) \left( \widetilde{\overline{\mathcal{D} \vh^+_l}} - \widetilde{\overline{\mathcal{D} \vh_l}} \right).$
        
        \STATE
        $(\tilde{\vs}_G^l, {\tilde{\vy}_G^l}) = D_{P}D_{LM}({\vs}_G^l, {{\vy}_G^l})$ with $H = H_G^l$, $\mu_1 = \mu_1$,
        {$\mu_2 = \lambda_G^l$}
        \COMMENT{See Algorithm \ref{algo_5}}  
            
        \STATE Use BFGS or L-BFGS with $(\tilde{\vs}_G^l, {\tilde{\vy}_G^l})$ to update $H_G^l$ \COMMENT{We name the algorithm K-BFGS or K-BFGS(L), respectively, when BFGS or L-BFGS is used.} 

    \ENDFOR
    
    \ENDIF
    
    \ENDFOR

    

    \end{algorithmic}
\end{algorithm}

Algorithm \ref{algo_8} gives the pseudo-code for
our proposed methods K-BFGS and K-BFGS(L).
Note that one can use either BFGS or L-BFGS update for $H_G$,
in which case we name the algorithm
K-BFGS and
K-BFGS(L), respectively. 
For simplicity, we assume that all layers in the model are convolutional layers. However, the algorithm can easily be adapted to fully-connected layers, hence applicable to MLP models or CNN models that contain fully-connected layers.


\subsection{Usage of Minibatches and Moving Averages}

Because there is usually a large amount of data, we use minibatches to estimate the quantities needed at every iteration.
We use $\widetilde{X}$ to denote the average value of $X$ over a minibatch for any quantity $X$, which is usually used as an estimate to $\E_n[X(n)]$.
Moreover, we use moving averages to both reduce the stochasticity and incorporate more information from the past:
\begin{itemize}[topsep=0pt,itemsep=-1ex,partopsep=1ex,parsep=1ex]
    \item
    {\bf Gradient.} At every iteration, the gradient $\widetilde{\mathcal{D} W}$ is estimated from a minibatch. We use a momentum scheme to get a better estimate $\widehat{\mathcal{D} W}$ of the gradient, i.e. we update
    $$\widehat{\mathcal{D} W} = \beta \widehat{\mathcal{D} W} + \widetilde{\mathcal{D} W}.$$

    \item {\bf BFGS updating for $H_G$.} By (\ref{eq_6}) and (\ref{eq_7}), we use both a minibatch and moving averages to estimate the $(\vs, \vy)$ for $H_G$, i.e. we update
    \begin{align*}
    & \vs_G = \beta \vs_G + (1-\beta) \left( \widetilde{\overline{\vh^+}} - \widetilde{\overline{\vh}} \right),
    \\
    & \vy_G = \beta \vy_G + (1-\beta) \left( \widetilde{\overline{\mathcal{D} \vh^+}} - \widetilde{\overline{\mathcal{D} \vh}} \right).
    \end{align*}

    \item {\bf BFGS updating for $H_A$.} In (\ref{eq_1}), we estimate the value of $A$ from the current minibatch, i.e. $\widetilde{\sum_{t \in \mathcal{T}} A_{t, t}}$, as well as $\hat{\va} = \widetilde{\overline{\va}}$. Note that $A \vs_A$ can be computed without forming $A$. 
\end{itemize}

\subsection{Other Details}
\label{sec_18}

Unlike \citet{goldfarb2020practical}, which always perform the whole K-BFGS process at every iteration, we introduce the so-called curvature update frequency $T$, controlling how frequently the algorithm update its curvature matrices. In other words, when $k \not\equiv 0 \pmod{T}$, only from Line \ref{line_12} to Line \ref{line_13}
of Algorithm \ref{algo_8} is incurred. 

Note that Algorithm \ref{algo_8} contains only one damping hyper-parameter (HP) $\lambda$, and sets $\lambda_A^l = \sqrt{|\mathcal{T}^l|} \sqrt{\lambda}$, $\lambda_G^l = \frac{1}{\sqrt{|\mathcal{T}^l|}} \sqrt{\lambda}$ for each convolutional layer $l = 1, ..., L$, where $\mathcal{T}^l$ denotes the sets of spatial locations in layer $l$. This can be viewed as "rebalancing" $A$ and $G$, i.e. setting $A$ to be $\E_n \left[ \frac{1}{\sqrt{|\mathcal{T}|}} \sum_{t \in \mathcal{T}} A_{t, t}(n) \right]$ and $G$ to be $\E_n \left[ \frac{1}{\sqrt{|\mathcal{T}|}} \sum_{t \in \mathcal{T}} G_{t, t}(n) \right]$. For fully-connected layers (if there is any), we set $\lambda_A^l = \lambda_G^l = \sqrt{\lambda}$, as in \citet{goldfarb2020practical}. Since $\lambda_A^l \lambda_G^l = \lambda$, adding $\lambda_A^l \vs_A^l$ and $\lambda_G^l {\vs}_G^l$, respectively, to the vectors $\vy_A^l$ and $\vy_G^l$ before applying BFGS (or L-BFGS) to  $H_A^l$ and  $H_G^l$, can be viewed as an approximation to adding the overall LM damping factor $\lambda I$  to $(H^l)^{-1} = (H_A^l)^{-1}  \otimes (H_G^l)^{-1} $ prior to updating.

{
Moreover, a warm start computation of $A_l$ is included, i.e. $A_l$ is computed from the whole dataset before first iteration, which is then used to initialize $H_A^l$. This only introduces a mild overhead, as this warm start computation take no more than the time for one full epoch, and gives a good starting point of $H_A^l$. 
}

When using the L-BFGS derived matrix $H_G^l$ to compute $H_G^l \widehat{\mathcal{D} W_l}$, instead of using the classical two-loop recursion of L-BFGS, we follow the "non-loop" implementation in \citet{byrd1994representations}, which is faster in practice because $\widehat{\mathcal{D} W_l}$ is a matrix, not a vector.

For CNN models that have batch normalization layers, we use the momentum gradient directions to update its parameters, with its own learning rate $\alpha_k / \lambda$. (Note that $\alpha_k / \lambda$ can be viewed as the "effective" learning rate in K-BFGS, roughly speaking.)

In terms of some default hyper-parameters, as shown in Algorithm \ref{algo_8}, decay parameters $\beta = 0.9$, and $\mu_1 = 0.2$ in $D_P D_{LM}$.
For K-BFGS(L),
the number of $(\vs, \vy)$ pairs stored for L-BFGS was set to be 100. These settings are exactly the same as in \citet{goldfarb2020practical}.

\section{Proof of Theorem \ref{thm_4}}
\label{sec_11}
\begin{proof}

We first derive the structure of the gradient.
By (\ref{eq_12}), 
\begin{align*}
    \frac{\partial f}{\partial w_{i,j,\delta}}
    & = \sum_{t \in \mathcal{T}} \frac{\partial f}{\partial h_{i,t}} \frac{\partial h_{i,t}}{\partial w_{i,j,\delta}}
    = \sum_{t \in \mathcal{T}} \mathcal{D} h_{i,t} a_{j,t+\delta},
    \\ \frac{\partial f}{\partial b_{i}}
    & = \sum_{t \in \mathcal{T}} \frac{\partial f}{\partial h_{i,t}} \frac{\partial h_{i,t}}{\partial b_{i}}
    = \sum_{t \in \mathcal{T}} \mathcal{D} h_{i,t}.
\end{align*}
Hence,
\begin{align*}
    \mathcal{D} {\vw}_i = \sum_{t \in \mathcal{T}} \mathcal{D} h_{i,t} {\va}_t
    \Rightarrow
    \mathcal{D} {W} = \sum_{t \in \mathcal{T}} \mathcal{D} \vh_t \left( {\va}_t \right)^\top.
\end{align*}
and
$\text{vec}( \mathcal{D} {W}) = \sum_{t \in \mathcal{T}} {\va}_t \otimes \mathcal{D} \vh_t$.

{
To derive the Hessian of {$f(\cdot;\theta)$} for a single data-point, 
it follows from (\ref{eq_12}) that
\begin{align*}
    & \frac{\partial (\mathcal{D} h_{i,t})}{\partial w_{i',j,\delta}}
    = \sum_{t'} \frac{\partial (\mathcal{D} h_{i,t})}{\partial h_{i',t'}} \frac{\partial h_{i',t'}}{\partial w_{i',j,\delta}}
    = \sum_{t'} \frac{\partial^2 f}{\partial h_{i,t} \partial h_{i',t'}} a_{j,t'+\delta},
    \\
    & \frac{\partial (\mathcal{D} h_{i,t})}{\partial b_{i'}}
    = \sum_{t'} \frac{\partial (\mathcal{D} h_{i,t})}{\partial h_{i',t'}} \frac{\partial h_{i',t'}}{\partial b_{i'}}
    = \sum_{t'} \frac{\partial^2 f}{\partial h_{i,t} \partial h_{i',t'}}.
\end{align*}
Hence,
\begin{align*}
    \frac{\partial (\mathcal{D} h_{i,t})}{\partial \vw_{i'}}
    = \sum_{t'} \frac{\partial^2 f}{\partial h_{i,t} \partial h_{i',t'}} \va_{t'},
\end{align*}
and
\begin{align*}
    \frac{\partial^2 f}{\partial \vw_i \partial \vw_{i'}}
    = \frac{\partial}{\partial \vw_{i'}} \left( \sum_t \mathcal{D} h_{i,t} {\va}_t \right)
    = \sum_t \frac{\partial (\mathcal{D} h_{i,t})}{\partial \vw_{i'}} \va_t^\top
    = \sum_t \sum_{t'} \frac{\partial^2 f}{\partial h_{i,t} \partial h_{i',t'}} A_{t,t'}.
\end{align*}
Hence,
\begin{align*}
    \frac{\partial^2 f}{\partial \text{vec}(W)^2}
    = \sum_{t,t'} A_{t,t'} \otimes G_{t,t'}.
\end{align*}

}

\end{proof}

\section{Proof of Convergence for a Variant of
K-BFGS(L) and Associated Lemmas}
\label{sec_2}
\begin{algorithm}[h]
    \caption{
    {K-BFGS(L)} with $D_{P(I)} D_{LM}$ and exact inversion of $A$
    }
    \label{algo_7}
    \begin{algorithmic}[1]
    
    
    
    \REQUIRE Given learning rates $\{ \alpha_k \}$, damping values $\lambda_A^l, \lambda_G^l > 0 \ (l=1,...,L)$,
    {batch size $m$}, {$0 < \mu_1 < 1$, $0 < \beta < 1$}
    
    
    \STATE $A_l = \mathbb{E}_n \left[ \sum_{t \in \mathcal{T}} \va_t^{l}(n) \va_t^{l}(n)^\top \right]$, $H_A^l = (A_l + \lambda_A^l I_A)^{-1}$,
    $H_G^l = (\lambda_G^l)^{-1} I$, $\vs_G^l = \vy_G^l = 0$ ($l = 1, ..., L$)
    \COMMENT{Initialization}
    
    \FOR{$k = 1, 2, ...$}
    
    \STATE Sample mini-batch $M_k$ of size $m$
    
    \STATE Perform a forward-backward pass over $M_k$ to compute stochastic gradient $\widetilde{\mathcal{D} W_l}$ ($l = 1, ..., L$)
    
    \FOR{$l = 1, ..., L$}
    
    
        

        
        \STATE $p_l = H_G^l \widetilde{\mathcal{D} W_l} H_A^l$
        \COMMENT{$H_G^{l}$ is initialized as
        {$\lambda_G^{-1} I$} in L-BFGS}
        
        \STATE $W_l = W_l - \alpha_k p_l$

    \ENDFOR
    
    \STATE Perform another forward-backward pass over $M_k$ to compute
    $\overline{\vh_l^+}$ and $\overline{\mathcal{D} \vh_l^+}$ ($l = 1, \ldots, L$)
    
    \FOR{$l = 1, ..., L$}
        \STATE \COMMENT{Compute $H_A^l$}
    
        \STATE Compute $\hat{A}_l = \widetilde{\sum_t A_{t, t}^l}$,
        $H_A^l = \left( \hat{A}_l + \lambda_A^l I \right)^{-1}$
        
        \STATE \COMMENT{Update $H_G^l$ by
        L-BFGS}
        
        \STATE 
        $\vs_G^l = \beta \vs_G^l + (1-\beta) \left( \widetilde{\overline{\vh^+_l}} - \widetilde{\overline{\vh_l}} \right), \vy_G^l = \beta \vy_G^l + (1-\beta) \left( \widetilde{\overline{\mathcal{D} \vh^+_l}} - \widetilde{\overline{\mathcal{D} \vh_l}} \right).$

        \STATE
        {$(\tilde{\vs}_G^l, {\tilde{\vy}_G^l}) = D_{P(I)}D_{LM}({\vs}_G^l, {{\vy}_G^l})$} with $H = H_G^l$, $\mu_1 = \mu_1$, $\mu_2 = \lambda_G^l$
        \COMMENT{See Algorithm \ref{algo_6}}  
        
            
        \STATE Use L-BFGS with $(\tilde{\vs}_G^l, {\tilde{\vy}_G^l})$ to update $H_G^l$
    
    \ENDFOR
    
    \ENDFOR

    

    \end{algorithmic}
\end{algorithm}

Algorithm \ref{algo_7} gives the variant of
{K-BFGS(L)} that is used in the convergence proof. Algorithm \ref{algo_7} differs from the actual implementation of K-BFGS(L), i.e. the one in Algorithm \ref{algo_8}, in the following:
\begin{enumerate}
    
    \item 
    $D_{P(I)} D_{LM}$ (Algorithm \ref{algo_6}) is used instead of $D_P D_{LM}$ (Algorithm \ref{algo_5}).
{$D_{P(I)} D_{LM}$} differs from $D_P D_{LM}$ by replacing $H$ by a scaled identity matrix $\mu_2^{-1} I$, where it appears in Algorithm \ref{algo_5}. This is justifiable partly because, in our actual implementation of L-BFGS, $H_G$ is always initialized with the scaled identity matrix $\mu_2^{-1} I$ where $\mu_2 = \lambda_G^l$;

    \item $H_A^l$ is computed by simply inverting $\hat{A}_l + \lambda_A^l I$, instead of using minibatched Hessian-action BFGS;
    
    \item 
    Gradient is estimated from the current minibatch without momentum.

\end{enumerate}

For simplicity, we also assume the curvature update frequency $T = 1$ in Algorithm \ref{algo_7}. However, all the proofs and results still hold if $T > 1$.

\begin{algorithm}[tb]
    \caption{$D_{P(I)}D_{LM}$}
    \label{algo_6}
    \begin{algorithmic}[1]
    
    \STATE
    {\bf Input:}
    $\vs$, $\vy$;
    {\bf Output:}
    $\tilde{\vs}$, $\tilde{\vy}$;
    {\bf Given:}
    $0 < \mu_1 < 1, \mu_2 > 0$

    \IF{
    {$\vs^\top \vy < \mu_1 \vy^\top (\mu_2^{-1} I) \vy$}
    }
    \STATE
    {$\theta_1 = \frac{(1-\mu_1) \vy^\top  \vy / \mu_2}{\vy^\top  \vy / \mu_2 - \vs^\top \vy}$}
    \ELSE
    \STATE $\theta_1 = 1$
    \ENDIF

    \STATE 
    {$\tilde{\vs} = \theta_1 \vs + (1-\theta_1) \mu_2^{-1} \vy$} 
    \COMMENT{Powell's damping with
    {$H = \mu_2^{-1} I$}}

    
    \STATE
    $\tilde{\vy} = \vy + \mu_2 \tilde{\vs}$
    \COMMENT{Levenberg-Marquardt damping on $H^{-1}$}

    \STATE
    {\bf return:} $\tilde{s}$, $\tilde{y}$
    
    \end{algorithmic}
\end{algorithm}

\subsection{Relevant Proofs for Theorem \ref{thm_3}
}
\label{sec_4}




\subsubsection{Proof of Lemma \ref{lemma_3}}
\begin{proof}

First, similar to Powell's damping on $H$, we can show that
{$\tilde{\vs}^\top \vy \ge \frac{\mu_1}{\mu_2} \vy^\top \vy \ge 0$}. Hence, 
$\tilde{\vs}^\top \tilde{\vy}  = \tilde{\vs}^\top \vy + \mu_2 \tilde{\vs}^\top \tilde{\vs} \ge \mu_2 \tilde{\vs}^\top \tilde{\vs}$. 

To see the second inequality, by using that $\tilde{\vs}^\top \tilde{\vy} \geq   \tilde{\vs}^\top \vy$
it follows that
{
\begin{align*}
    \tilde{\vy}^\top \tilde{\vy}
    & = \vy^\top \vy +2 \mu_2 \tilde{\vs}^\top \vy + 
    {\mu_2}^2  \tilde{\vs}^\top \tilde{\vs}
    = \vy^\top \vy +2 \mu_2 \tilde{\vs}^\top (\tilde{\vy} - \mu_2 \vs) + 
    {\mu_2}^2  \tilde{\vs}^\top \tilde{\vs}
    \\
    & \le \vy^\top \vy +2 \mu_2 \tilde{\vs}^\top \tilde{\vy}
    \leq \mu_2 (\frac{1}{\mu_1} + 2) \tilde{\vs}^\top \tilde{\vy}.
\end{align*}
}

\end{proof}

\subsubsection{Proof of Lemma \ref{lemma_4}}
\begin{proof}

Corresponding to the BFGS update (\ref{eq_13})  of $H$, the update of $B$ is
$$
B^+ = B - \frac{B \vs \vs^\top B}{\vs^\top B s} + \rho \vy \vy^\top.
$$
Hence,
\begin{align*}
    \| B^+\|
    & \leq \| B - \frac{B \vs \vs^\top B}{\vs^\top B \vs} \| + \|\rho \vy \vy^\top \|
    \leq \| B \| + \frac{\vy^\top \vy}{\vs^\top \vy}
    \leq \| B \| + \frac{1}{\mu_3}.
\end{align*}
Also, using the fact that for the spectral norm $\| \cdot \|$,
$\|  I - \rho \vs \vy^\top \| = \|  I - \rho \vy \vs^\top \|$, we have that


\begin{align*}
    \|H^+\| 
    \leq \|H\| \| I - \rho \vs \vy^\top \|^2 +  \|\frac{\vs \vs^\top}{\vs^\top \vy} \| 
    \leq\|H\| \Large( \|I\| +  \frac{\|\vs\| \| \vy\|}{\vs^\top \vy}\Large)^2
     +  \frac{\vs^\top \vs}{\vs^\top \vy} 
    \leq (1 + \frac{1}{\sqrt{\mu_2}}\frac{1}{\sqrt{\mu_3}})^2 \|H\| + \frac{1}{\mu_2}.
\end{align*}

\end{proof}

\subsubsection{Proof of Lemma \ref{lemma_5}}


\begin{proof}

To simplify notation, we omit the subscript $G$, superscript $l$ and the iteration index $k$ in the proof.
Hence, our goal is to prove $\underline{\kappa}_G I \preceq H = H_G^l(k) \preceq \bar{\kappa}_G I$, 
for any $k$.
Let
{$(\vs_i, \vy_i)$ ($i = 1, ..., p$)}
denote the pairs used in an L-BFGS computation of 
$H$. 


Given an initial estimate
{$H_0 =B_0^{-1}= \lambda_G^{-1} I$} of $(G_l(\theta_k))^{-1}$,
the L-BFGS method updates
$H_{i}$ recursively as
\begin{align}
    H_{i}
    =\left(I-\rho_{i} \vs_i \vy_i^{\top} \right) H_{i-1} \left(I-\rho_{i} \vy_{i} \vs_{i}^{\top}\right)+\rho_{i} \vs_{i} \vs_{i}^{\top}, 
\label{eq_23}
\end{align}
where $\rho_i = (\vs_i^\top \vy_i)^{-1}$, $i=1, ..., p$, and equivalently, 
\[
    B_{i}
    = B_{i-1} - \frac{B_{i-1} \vs_i \vs_i^{\top} B_{i-1}}{\vs_i^{\top} B_{i-1} \vs_i} + \rho_i {\vy}_i {\vy}_i^{\top}, \quad i=1, \ldots, p,
\]

where $B_i = H_i^{-1}$.
Since we use
{$D_{P(I)} D_{LM}$}, by Lemma \ref{lemma_3}, we have that 
$\frac{\vs_i^\top \vs_i}{\vs_i^\top \vy_i} \leq \frac{1}{\mu_2}$
and
$\frac{\vy_i^\top \vy_i}{\vs_i^\top \vy_i} \leq \frac{1}{\mu_3}$.

Hence, by Lemma \ref{lemma_4}, we have that
$||B_i|| \le ||B_{i-1}|| + \frac{1}{\mu_3}$. Hence,
{$||B|| = ||B_p|| \le ||B_0|| + \frac{p}{\mu_3} = \lambda_G + \frac{p}{\mu_3}$}. Thus,
{$B \preceq \left( \lambda_G + \frac{p}{\mu_3} \right) I$}, and
{$H \succeq \left( \lambda_G + \frac{p}{\mu_3} \right)^{-1} I \equiv \underline{\kappa}_G I$}. 

On the other hand, by Lemma \ref{lemma_4}, we have that $\|H_i\|  \leq (1 + \frac{1}{\sqrt{\mu_2 \mu_3}})^2 \|H_{i-1}\| + \frac{1}{\mu_2}$. Hence, from the fact that
{$H_0 = \lambda_G^{-1} I$}, and induction, we have that
{$||H|| \le \lambda_G^{-1} \hat{\mu}^p + \frac{\hat{\mu}^p - 1}{ \hat{\mu} - 1}\frac{1}{\mu_2} \equiv \bar{\kappa}_G$}, where $\hat{\mu} = (1 + \frac{1}{\sqrt{\mu_2 \mu_3}})^2$.

\end{proof}


\subsubsection{Proof of Lemma \ref{lemma_1}}


\begin{proof}
In proving part (i), we omit the layer index $l$ for simplicity. First, because $\hat{A}$ is the averaged value across the minibatch, it suffices to show that for any data-point $n$,
{$\sum_t A_{t, t}(n) \preceq (J |\Delta| \varphi^2+1) |\mathcal{T}| I$}.

By AS. \ref{assumption_4}, $||\va_{t}(n)||^2 = \sum_{j, \delta} a_{j, t+\delta}(n)^2 + 1 \le J |\Delta| \varphi^2 + 1$. Hence, for any vector $\vx$, 
\begin{align*}
    \vx^\top A_{t, t}(n) \vx
    = \vx^\top (\va_t(n) \va_{t}(n)^\top) \vx
    = (\va_{t}(n)^\top \vx)^2
    \le ||\va_{t}(n)||^2 ||\vx||^2
    \le (J |\Delta| \varphi^2+1) ||\vx||^2.
\end{align*}
Hence, $A_{t, t}(n) \preceq (J |\Delta| \varphi^2+1) I$ and $\sum_t A_{t, t}(n) \preceq (J |\Delta| \varphi^2+1) |\mathcal{T}| I$,  proving part (i). 




Note that $\hat{A}_l + \lambda_A^l I \succeq \lambda_A^l I$ because $\hat{A}_l$ is PSD. 
On the other hand, $\hat{A}_l + \lambda_A^l I \preceq ((J_l |\Delta| \varphi^2 + 1) |\mathcal{T}^l| + \lambda_A^l) I$. 
Hence, 
\begin{align}
    ((J_l |\Delta| \varphi^2 + 1) |\mathcal{T}^l| + \lambda_A^l)^{-1} I \preceq H_A^l 
    = (\hat{A}_l + \lambda_A^l I)^{-1} \preceq (\lambda_A^l)^{-1} I.
    \label{eq_9}
\end{align}

By (\ref{eq_9}) and Lemma \ref{lemma_5}, we have that $\underline{\kappa}^l I \preceq H_A^l \otimes H_G^l = H_l \preceq \overline{\kappa}^l I$ where $\underline{\kappa}^l = ((J_l |\Delta| \varphi^2 + 1) |\mathcal{T}^l| + \lambda_A^l)^{-1} \underline{\kappa}_G^l$, $\overline{\kappa}^l = (\lambda_A^l)^{-1} \overline{\kappa}_G^l$. Finally, $\underline{\kappa} I \preceq H = \text{diag} \{ H_1, ..., H_L \} \preceq \overline{\kappa} I$, where $\underline{\kappa} = \min \{ \underline{\kappa}^1, ..., \underline{\kappa}^L \}$ and $\overline{\kappa} = \max \{ \overline{\kappa}^1, ..., \overline{\kappa}^L \}$. 

\end{proof}


\subsubsection{Proof of Theorem \ref{thm_3}}

\begin{proof}
First, Algorithm \ref{algo_7} falls in the general framework of the Stochastic Quasi-Newton (SQN) method (Algorithm 2.1) in \citet{wang2017stochastic}. 
Second, by Lemma \ref{lemma_1}, Assumption AS.3 in \citet{wang2017stochastic} is satisfied. Also, by the way $H_A$ and $H_G$ are updated, AS.4 in \citet{wang2017stochastic} is satisfied. Hence, since Assumptions AS.1 and AS.2 are identical to the other two assumptions made in \citet{wang2017stochastic}, we are able to apply Theorem 2.8 in that paper to Algorithm 3 in this Section.. 

\end{proof}





\section{Experiment Details}

\subsection{Specification on Comparing Algorithms}
\label{sec_10}

\begin{algorithm}[ht]
    \caption{KFAC
    }
    \label{algo_1}
    \begin{algorithmic}[1]

    \REQUIRE Given learning rates $\{ \alpha_k \}$, damping value $\lambda$, batch size $m$,
    {statistics update frequency $T_1$, inverse update frequency $T_2$}
    
    \STATE $\beta = 0.9$
    
    \STATE $\widehat{\mathcal{D} W_l} = 0$, $\Omega_l = \mathbb{E}_n \left[ \sum_{t \in \mathcal{T}} \va_t^{l}(n) \va_t^{l}(n)^\top \right]$, $\Gamma_l = \mathbb{E}_n \left[ \overline{\mathcal{D} \vh^{l}(n) (\mathcal{D} \vh^{l} (n))^\top} \right]$ with $y$ sampled from the predictive distribution ($l = 1, ..., L$)
    \COMMENT{Initialization}

    \FOR {$k=1,2,\ldots$}
        \STATE Sample minibatch $M_k$ of size $m$
        
        \STATE Perform a forward-backward pass over the current minibatch $M_k$ to compute $\widetilde{\mathcal{D} W_l}$ for $l = 1, ..., L$
        
        \FOR {$l=1,2,\ldots L$}
            
            \STATE $\widehat{\mathcal{D} W_l} = \beta \widehat{\mathcal{D} W_l} + \widetilde{\mathcal{D} W_l}$
            
            \STATE
            $p_l = H_{\Gamma}^{l} \widehat{\mathcal{D} W_l} H_{\Omega}^l$
            \label{line_5}

            \STATE $W_l = W_l - \alpha_k p_{l}$.
        \ENDFOR
        
        \IF{$k \equiv 0 \pmod{T_1}$}
        
        \STATE Perform another pass over $M_k$ with $y$ sampled from the predictive distribution to compute $\mathcal{D} \vh_t^l$ for $l = 1, ..., L$
        
        \FOR {$l=1,2,\ldots L$}
        
        \STATE Update
        $\Omega_l = \beta \cdot \Omega_l + (1-\beta) \cdot \widetilde{\sum_{t \in \mathcal{T}} \va_t^{l} (\va_t^{l})^\top}$,
        $\Gamma_l = \beta \cdot \Gamma_l + (1-\beta) \cdot \widetilde{\overline{\mathcal{D} \vh_t^{l} (\mathcal{D} \vh_t^{l})^\top}}$
        \label{line_3}
        
            
        
        
        \ENDFOR
        
        \ENDIF

        \IF{$k \equiv 0 \pmod{T_2}$}
        
        \FOR {$l=1,2,\ldots L$}
            
        \STATE 
        Recompute $H_{\Omega}^l = (\Omega_l + \pi_l \sqrt{\lambda} I)^{-1}$,
        $H_{\Gamma}^l = (\Gamma_l + \frac{1}{\pi_l} \sqrt{\lambda} I)^{-1}$, where $\pi_l = \sqrt{\frac{\text{trace}(\Omega_l \otimes I)}{\text{trace}(I \otimes \Gamma_l)}}$
        \alglinelabel{line_11}
        
        \ENDFOR
        
        \ENDIF

        \ENDFOR  
    
    \end{algorithmic}
\end{algorithm}

We describe the version of KFAC that we implemented in Algorithm \ref{algo_1}. 
Note that $\Omega_l$ in KFAC is the same as $A_l$ in K-BFGS. 
Similar to the pseudo-code of K-BFGS, we assume that all layers are convolutional. However, one can easily derive our KFAC implementation for fully-connected layers from Algorithm \ref{algo_1} and \citet{martens2015optimizing}.

Note that KFC-pre in \citet{grosse2016kronecker} differs from Algorithm \ref{algo_1} in the following ways:
\begin{itemize}
    
    
    \item KFC-pre uses clipping for the approximated natural gradient direction $p$;
    
    \item KFC-pre uses momentum for $p$;
    
    \item KFC-pre
    uses parameter averaging on $\theta$.

\end{itemize}
All of these techniques can also be applied to K-BFGS. Since we are primarily interested in comparing different pre-conditioning matrices, we chose not to include such techniques in our implementation.

In our KFAC implementation, for the CNN problems that have batch normalization (BN) layers, we update the parameters of the BN layers with the gradient direction, along with the same learning rate $\alpha$, as was done in \citet{zhang2018three}.

Also note that in Algorithm \ref{algo_1}, a warm start computation of $\Omega_l$ and $\Gamma_l$ is included, i.e. initial estimates of $\Omega_l$ and $\Gamma_l$ are computed from the whole dataset before the first iteration. A similar warm start computation of $A_l$ was also included in K-BFGS. Since these warm start computations take no more than the time for one full epoch, we did not include the times for warm starts in the figures.

Finally, Adam was implemented exactly as in \citet{kingma2014adam}, with $\beta_1 = 0.9$ and $\beta_2 = 0.999$, as suggested in the paper. We view the hyper-parameter $\epsilon$ in Adam as the damping term, and tune it in our experiments (specified below). 

In SGD with momentum, the momentum of gradient is computed as in K-BFGS (Algorithm \ref{algo_8}) and KFAC (Algorithm \ref{algo_1}). In other words, if $g$ denotes the minibatch gradient, we update the momentum of graident by $\hat{g} = \beta \hat{g} + g$ with $\beta = 0.9$ at every iteration.


\subsection{Details on the Autoencoder Experiments}
\label{sec_15}

The autoencoder architectures are exactly the same as in \citet{goldfarb2020practical}.
The only difference is that we didn't include the regularization term $\frac{\eta}{2} ||\theta||^2$, because we focus on optimization performance so it is better to avoid this compounding factor and it is hard to know how to set the value of $\eta$ unless we include it in the HP tuning process. To be more specific, Table \ref{table_3} describes the model architectures. The activation functions of the hidden layers are always ReLU, except that there is no activation for the very middle layer. 

\begin{table}[tb]
  \caption{
  Model architectures for the MLP autoencoder problems
  }
  \label{table_3}
  \vskip 0.15in
  \centering
  \begin{tabular}{cccccccccc}
    \toprule
    & Layer width
    & Loss function
    \\
    \midrule
    MNIST
    &  [784, 1000, 500, 250, 30, 250, 500, 1000, 784]
    & binary cross entropy with sigmoid
    \\
    FACES
    & [625, 2000, 1000, 500, 30, 500, 1000, 2000, 625]
    & mean squared error
    \\
    CURVES
    &  [784, 400, 200, 100, 50, 25, 6, 25, 50, 100, 200, 400, 784]
    & binary cross entropy with sigmoid
    \\
    \bottomrule
  \end{tabular}
\end{table}

As in \citet{goldfarb2020practical}, we only use the training sets of the datasets, which contains 60k (MNIST\footnote{\url{http://yann.lecun.com/exdb/mnist/}}), 103.5k (FACES\footnote{\url{http://www.cs.toronto.edu/~jmartens/newfaces_rot_single.mat}}), and 20k (CURVES\footnote{\url{http://www.cs.toronto.edu/~jmartens/digs3pts_1.mat}}) training samples, respectively.


In order to obtain the results in Table \ref{table_1}, we first conducted a grid search for each algorithm based on the following ranges:
\begin{itemize}

\item learning rate: $\{$ 3e-5, 1e-4, 3e-4, 1e-3, 3e-3, 0.01, 0.03, 0.1, 0.3, 1 $\}$;

\item damping:
\begin{itemize}
    \item Kronecker-factored QN methods (i.e., $\lambda$ in Algorithm \ref{algo_8}) and KFAC (i.e., $\lambda$ in Algorithm \ref{algo_1}): $\{$ 0.03, 0.1, 0.3, 1, 3, 10, 30 $\}$;
    
    \item Adam (i.e., the $\epsilon$ HP in \citet{kingma2014adam}): $\{$ 1e-8, 1e-4, 1e-3, 0.01, 0.1 $\}$.
\end{itemize}
    
\end{itemize}

We selected the HP values that achieves the smallest training loss (see Table \ref{table_2}). We then ran each algorithm with their corresponding best HP values and 5 different random seed, and reported the average loss in Table \ref{table_1}. The training curves are also included in Figures \ref{fig_3}, \ref{fig_4}, and \ref{fig_5}, where the training loss is reported against number of epochs (left) and process time (right).

\begin{table}[tb]
  \caption{
  Best HP values (learning rate, damping) for Table \ref{table_1} as well as Figures \ref{fig_3}, \ref{fig_4}, and \ref{fig_5}.
  The dagger sign ($\dagger$) denotes that the curvature update frequency $T = 20$
  }
  \label{table_2}
  \vskip 0.15in
  \centering
  \begin{tabular}{cccccccccc}
    \toprule
    & K-BFGS
    & $\text{K-BFGS}^{\dagger}$
    & K-BFGS(L)
    & K-BFGS-20
    & K-BFGS(L)-20
    & KFAC
    & Adam
    & SGD-m
    \\
    \midrule
    MNIST
    & (0.03, 0.3)
    & (0.3, 30)
    & (0.03, 0.3)
    & (0.01, 0.1)
    & (0.003, 0.1)
    & (0.3, 10)
    & (1e-4, 1e-4)
    & (0.003, -)
    \\
    FACES
    & (0.03, 1)
    & (0.03, 3)
    & (0.03, 1)
    & (0.01, 0.3)
    & (0.01, 0.3)
    & (0.03, 0.1)
    & (1e-4, 1e-4)
    & (0.001, -)
    \\
    CURVES
    & (0.03, 0.3)
    & (0.3, 30)
    & (0.1, 3)
    & (0.01, 0.03)
    & (0.003, 0.03) 
    & (0.3, 10)
    & (1e-3, 1e-3)
    & (0.003, -)
    \\
    \bottomrule
  \end{tabular}
\end{table}

\begin{figure}[H]
  \centering
    \includegraphics[width=0.6\textwidth]{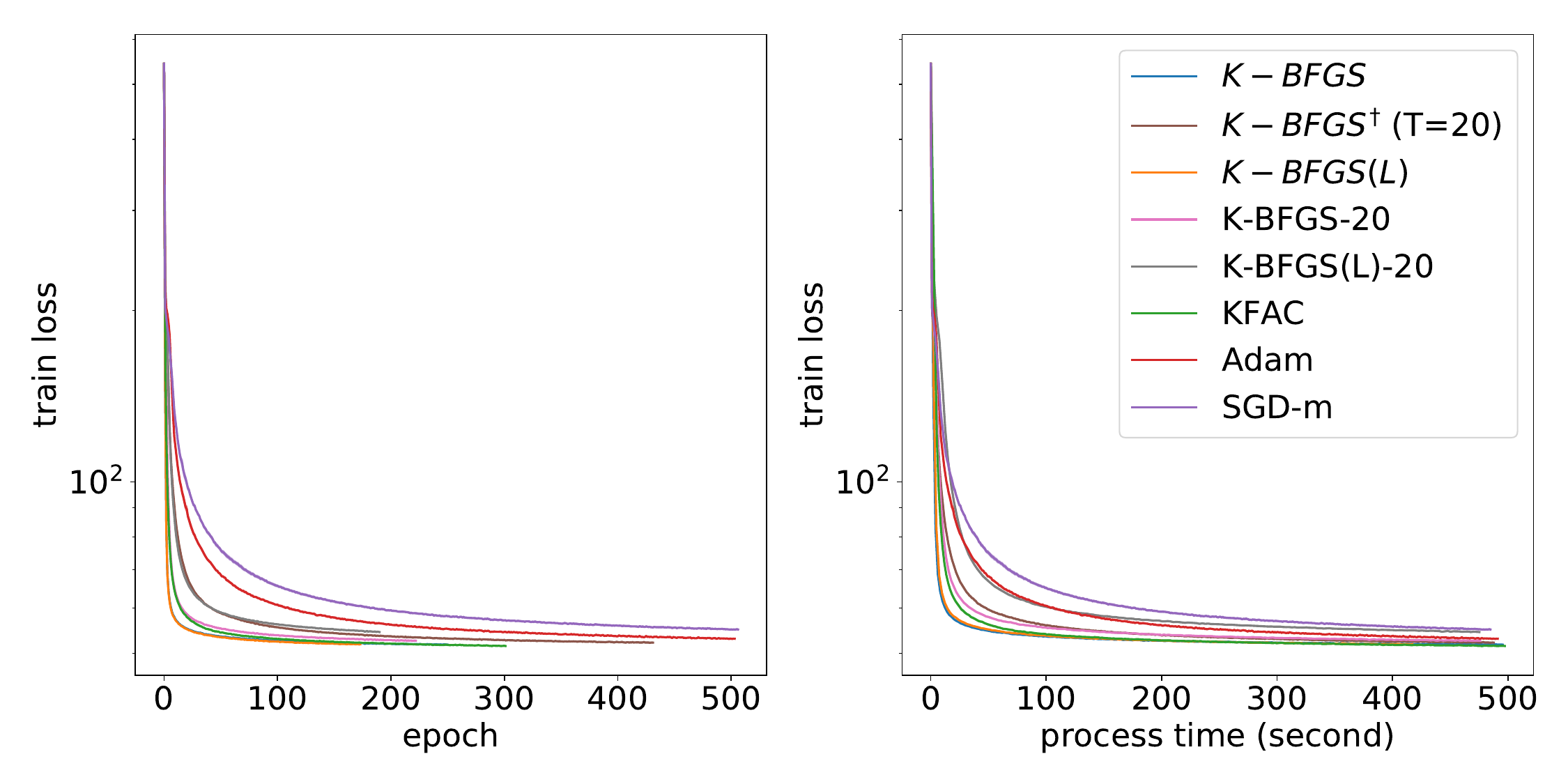}
  \caption{
   Optimization performance of K-BFGS, K-BFGS(L), their counterpart in \citet{goldfarb2020practical}, KFAC, Adam, and SGD-m on MNIST
  }
  \label{fig_3}
\end{figure}

\begin{figure}[H]
  \centering
    \includegraphics[width=0.6\textwidth]{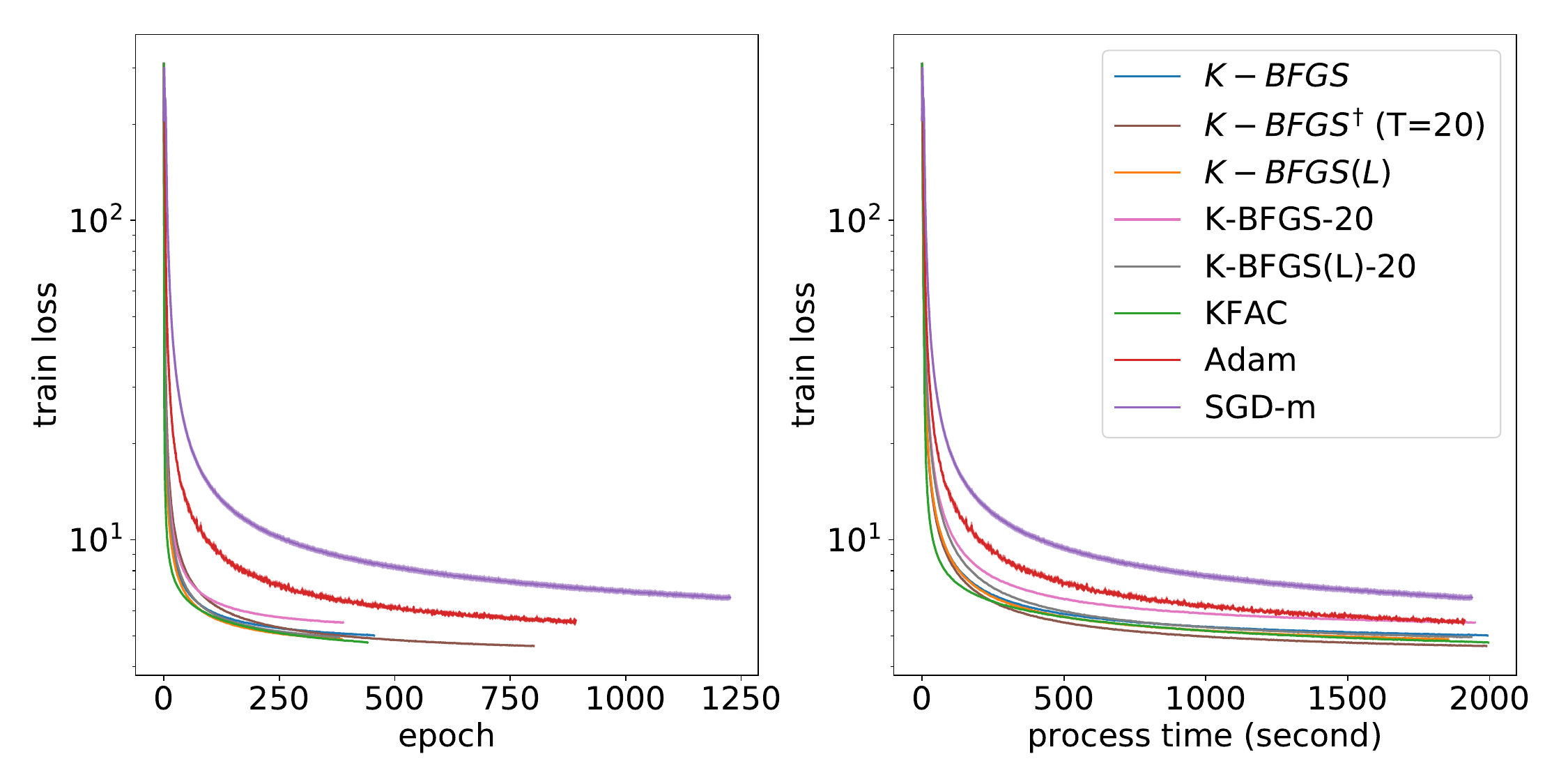}
  \caption{Optimization performance of K-BFGS, K-BFGS(L), their counterpart in \citet{goldfarb2020practical}, KFAC, Adam, and SGD-m on FACES
  }
  \label{fig_4}
\end{figure}

\begin{figure}[H]
  \centering
    \includegraphics[width=0.6\textwidth]{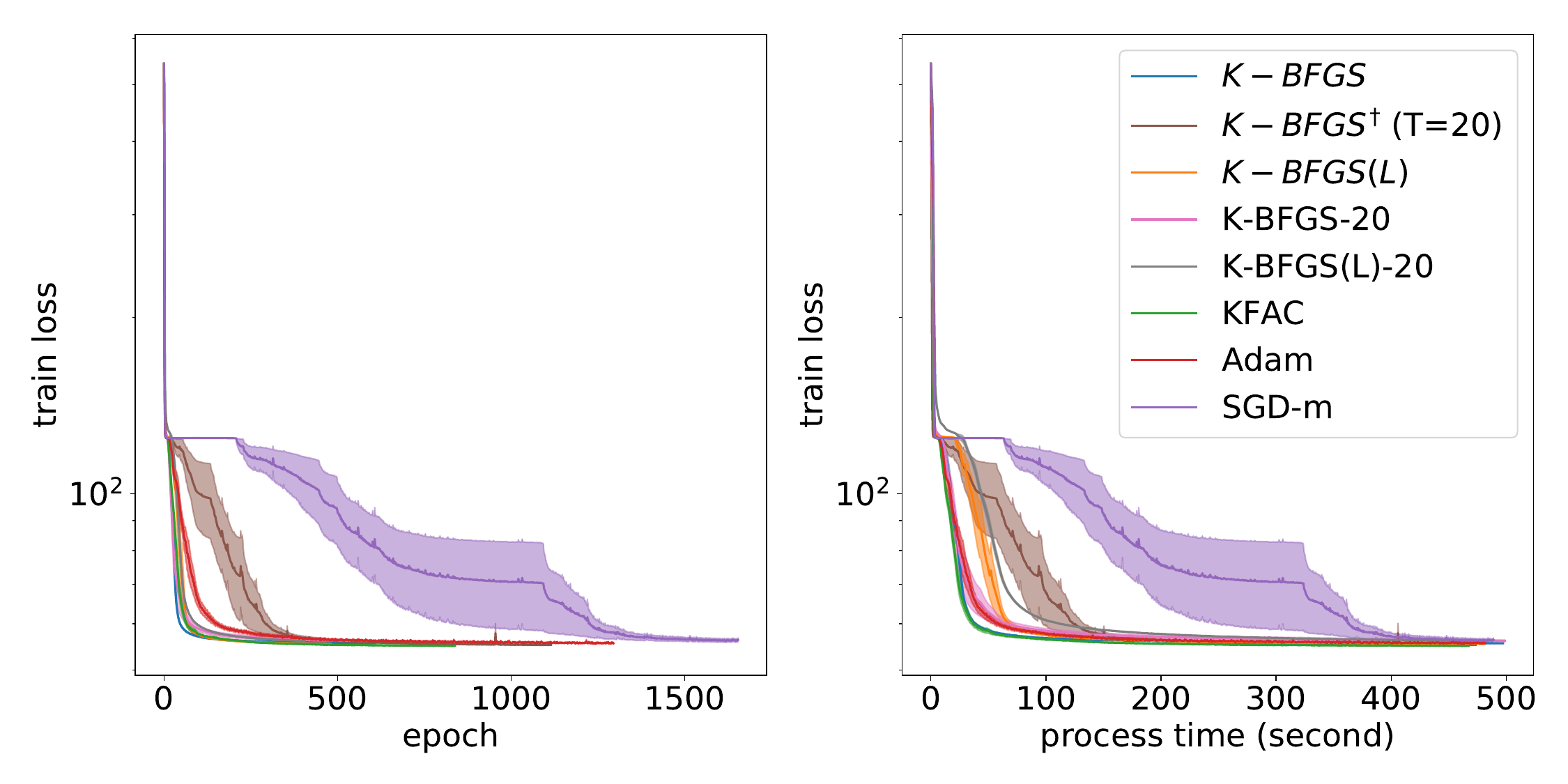}
  \caption{Optimization performance of K-BFGS, K-BFGS(L), their counterpart in \citet{goldfarb2020practical}, KFAC, Adam, and SGD-m on CURVES
  }
  \label{fig_5}
\end{figure}

\subsubsection{An Ablation Study}
\label{sec_16}

Besides the comparison presented in Section \ref{sec_17}, we also conducted an ablation study on the two generic improvements we presented in Section \ref{sec_13}. To be more specific, if both of the improvements are turned on, the algorithm is exactly the same as the one named "K-BFGS" in Table \ref{table_1}, whereas if both are turned off, it is the same as "K-BFGS-20" in Table \ref{table_1}. 

\begin{table}[H]
  \caption{
  Training loss with two improvements turned on or off. Reported values are averaged across 5 different random seeds, using the best HP values for each algorithm. 
  Improvement \#1 refers to the use of $D_{P} D_{LM}$, whereas improvement \#2 refers to the use of "minibatched" Hessian-action BFGS 
  }
  \label{table_8}
  \centering
  \begin{tabular}{llllll}
    \toprule
    Name of algorithm & Improvement \#1 & Improvement \#2 & MNIST & FACES & CURVES \\
    \midrule
    K-BFGS & 
    yes & yes & 51.60 & 5.00 & 55.46
    \\
    K-BFGS (\#1 off) &no & yes & 51.92 & 5.39 & 55.86
    \\
    K-BFGS (\#2 off) & yes & no & 51.45 & 5.26 & 55.88
    \\
    K-BFGS-20 & no & no & 52.38 & 5.46 & 56.00
    \\
    \bottomrule
  \end{tabular}
\end{table}

We repeated the same MLP autoencoder experiments described in Section \ref{sec_17}, and presented the results on four different variants in Table \ref{table_8}, which shows that using each one of the improvements alone yields better results than
the variant without improvements (i.e. K-BFGS-20),
and using the two together (i.e. K-BFGS) usually yields the best results. This ablation study, along with the reasoning in Section \ref{sec_13}, justifies the inclusion of the improvements we proposed.


\subsection{
Details on the CNN Experiments}
\label{sec_5}

The VGG16 model refers to the "model D" in \citet{simonyan2014very}, with the modifications that the 3 fully-connected (FC) layers at the end of the model being replaced with only one FC layer (input size equal the size of the output size of the last conv layer, and output size equal number of classes of the dataset), and a batch normalization layer is added after each of the convolutional layers in the model.
{These changes are usually adopted nowadays on top of the original VGG models.}
The ResNet32 model refers to the one in Table 6 of \citet{he2016deep}.



For all the algorithms that we tested, we use the weight decay technique to help improve generalization, which has shown to be effective for both 1st-order \citep{loshchilov2018decoupled} and 2nd-order methods \citep{zhang2018three}. To be more specific, take K-BFGS/K-BFGS(L) (Algorithm \ref{algo_8}) as an example, we replace Line \ref{line_14} with $W_l = W_l - \alpha_k (p_l + \gamma W_l)$
where $\gamma$ is the weigh decay factor. The same modification is done for SGD-m, Adam, and KFAC as well.

In order to obtain the results in Table \ref{table_7}, we first conducted a grid search for each algorithm based on the following ranges:

\begin{itemize}
    \item
    K-BFGS and K-BFGS(L):
    \begin{itemize}
        \item initial learning rate: $\{$ 0.03, 0.1, 0.3, 1, 3, 10, 30, 100, 300, 1e3, 3e3 $\}$
        
        \item weight decay $\gamma$: $\{$ 1e-7, 1e-6, 1e-5, 1e-4, 1e-3, 0.01, 0.1 $\}$
        
        \item damping (i.e., $\lambda$ in Algorithm \ref{algo_8}): $\{$ 1, 10, 100, 1e3, 1e4, 1e5 $\}$

        
        

    \end{itemize}

\item KFAC: 
\begin{itemize}
    \item initial learning rate: $\{$ 1e-3, 3e-3, 0.01, 0.03, 0.1, 0.3 $\}$
    
    \item weight decay $\gamma$: $\{$ 0.001, 0.01, 0.1, 1 $\}$
    
    \item damping (i.e., $\lambda$ in Algorithm \ref{algo_1}): $\{$ 1e-4, 0.001, 0.01, 0.1, 1, 10, 100 $\}$
\end{itemize}

\item Adam:
\begin{itemize}
    \item initial learning rate: $\{$ 3e-5, 1e-4, 3e-4, 1e-3, 3e-3, 0.01, 0.03, 0.1 $\}$
    
    \item weight decay $\gamma$: $\{$ 0.01, 0.1, 1, 10 $\}$
    
    \item damping (i.e., the $\epsilon$ HP in \citet{kingma2014adam}): $\{$ 1e-8, 1e-4, 0.01, 0.1, 1 $\}$
\end{itemize}

\item SGD-m: 

\begin{itemize}
    \item initial learning rate: $\{$ 3e-4, 1e-3, 3e-3, 0.01, 0.03, 0.1, 0.3 $\}$
    
    \item weight decay $\gamma$: $\{$ 1e-3, 0.01, 0.1, 1 $\}$
\end{itemize}
    
\end{itemize}

We selected the HP values that achieves the largest classification accuracy on the validation set (see Table \ref{table_6}). We then ran each algorithm with their corresponding best HP values and 5 different random seeds, and reported the average validation classification accuracy in Table \ref{table_7}. 
The training cross entropy loss (upper rows) and validation classification error (lower rows) against number of epochs (left columns) and process time (right columns) are also included in Figure \ref{fig_6} (in Section \ref{sec_9}), and Figure \ref{fig_7}, \ref{fig_8}, and \ref{fig_9} (below).


\begin{table}[tb]
  \caption{
  Best HP values (initial learning rate, weight decay, damping) for Table \ref{table_7} as well as Figures \ref{fig_6}, \ref{fig_7}, \ref{fig_8}, and \ref{fig_9}
  }
  \label{table_6}
  \vskip 0.15in
  \centering
  \begin{tabular}{ccccccc}
    \toprule
    & K-BFGS
    & K-BFGS(L)
    & KFAC
    & Adam
    & SGD-m
    \\
    \midrule
    VGG16, CIFAR10
    & (30, 1e-5, 1e4)
    & (1, 1e-3, 100)
    & (0.01, 0.1, 10)
    & (0.003, 0.1, 0.1)
    & (0.003, 0.1, -)
    \\
    ResNet32, CIFAR10
    & (100, 1e-5, 1e3)
    & (1e3, 1e-6, 1e4)
    & (0.01, 0.1, 0.01)
    & (0.003, 0.1, 0.01)
    & (0.03, 0.01, -)
    \\
    VGG16, CIFAR100
    & (0.1, 0.01, 10)
    & (0.3, 0.001, 100)
    & (0.01, 0.1, 1)
    & (3e-4, 1, 0.01)
    & (0.003, 0.1, -)
    \\
    ResNet32, CIFAR100
    & (1e3, 1e-6, 1e4)
    & (10, 1e-4, 100)
    & (0.01, 0.1, 0.001)
    & (0.01, 0.1, 0.01)
    & (0.03, 0.01, -)
    \\
    \bottomrule
  \end{tabular}
\end{table}


From Table \ref{table_6}, we can see that, the optimal damping value for K-BFGS and K-BFGS(L) tends to be larger than that for KFAC, which is somewhat reasonable since they use quasi-Newton approaches to estimate curvature information. Hence, a stronger damping term (regularization) is needed. Moreover, in our experiments, for K-BFGS and K-BFGS(L), there was a strong positive correlation between the optimal learning rate and damping values, and a strong negative correlation between the optimal learning rate and weight decay values. These are not surprising because the "effective" learning rate involves the ratio of the learning rate to the damping, and the "effective" weight decay factor is the product of the weight decay value and the learning rate. 



    



\begin{figure}[H]
  \centering
    \includegraphics[width=0.55\textwidth,height=8cm]{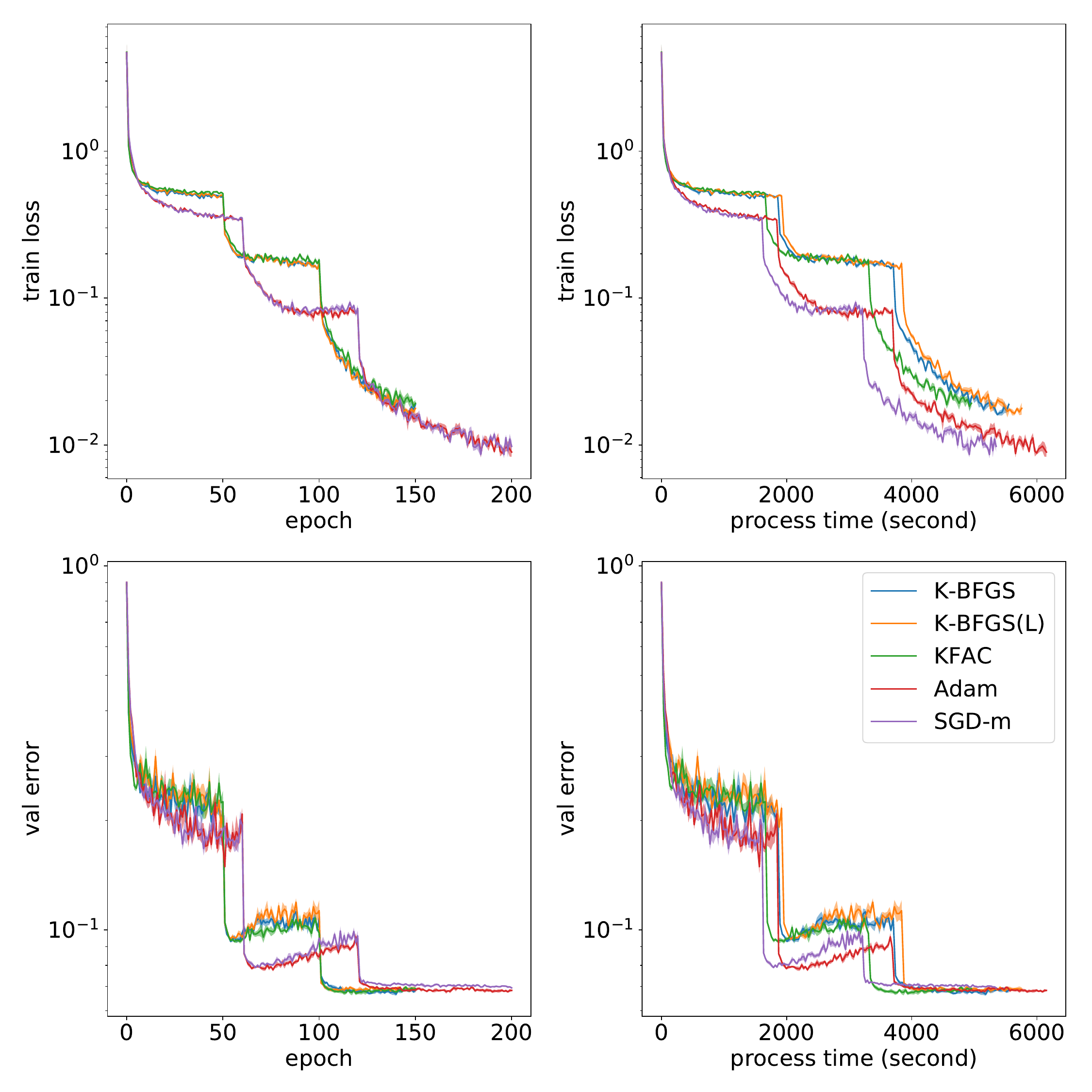}
  \caption{
   Performance of K-BFGS, K-BFGS(L), KFAC, Adam, and SGD-m on ResNet32 with CIFAR10 
  }
  \label{fig_7}
\end{figure}

\begin{figure}[H]
  \centering
    \includegraphics[width=0.6\textwidth,height=8cm]{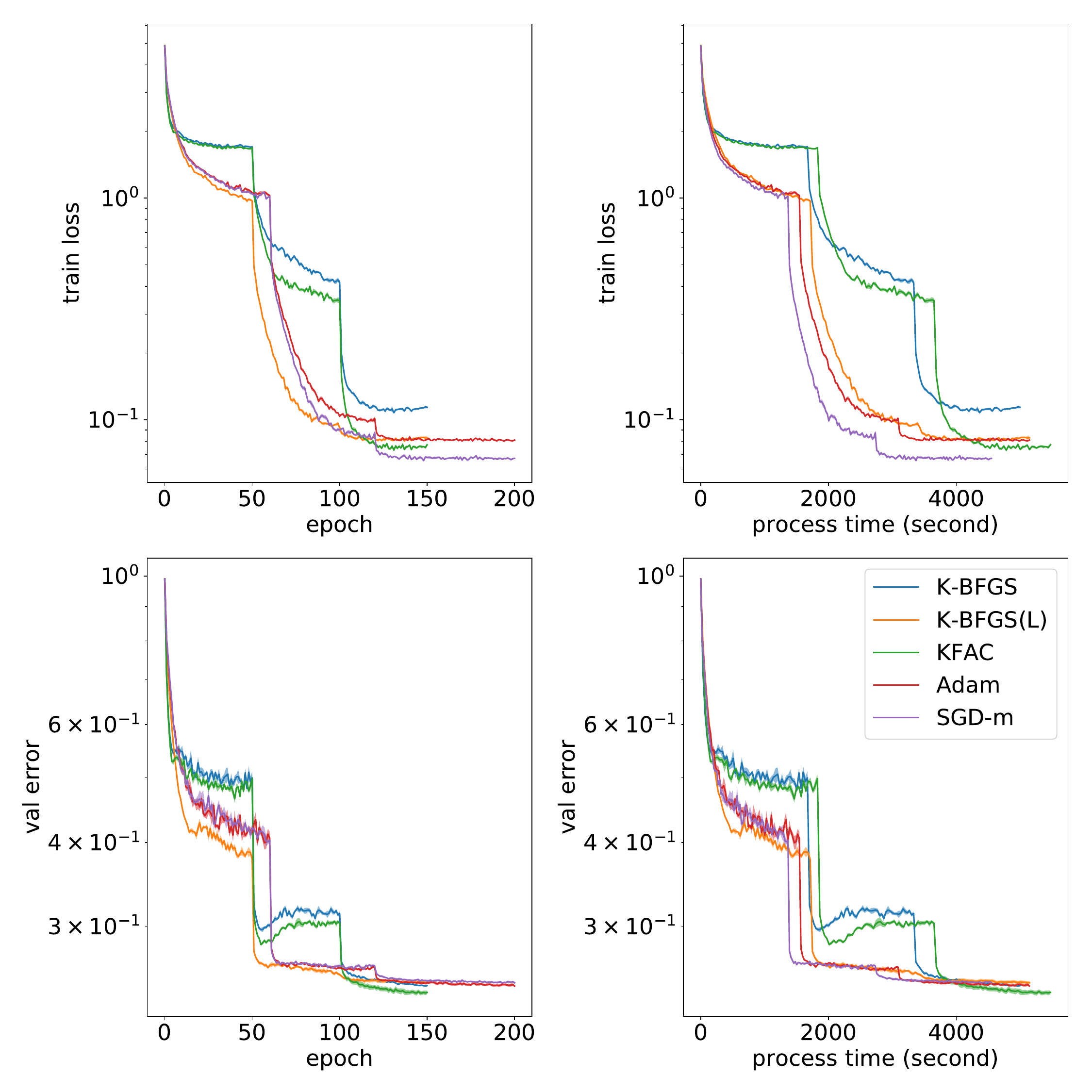}
  \caption{
   Performance of K-BFGS, K-BFGS(L), KFAC, Adam, and SGD-m on VGG16 with CIFAR100
  }
  \label{fig_8}
\end{figure}

\begin{figure}[H]
  \centering
    \includegraphics[width=0.6\textwidth,height=8cm]{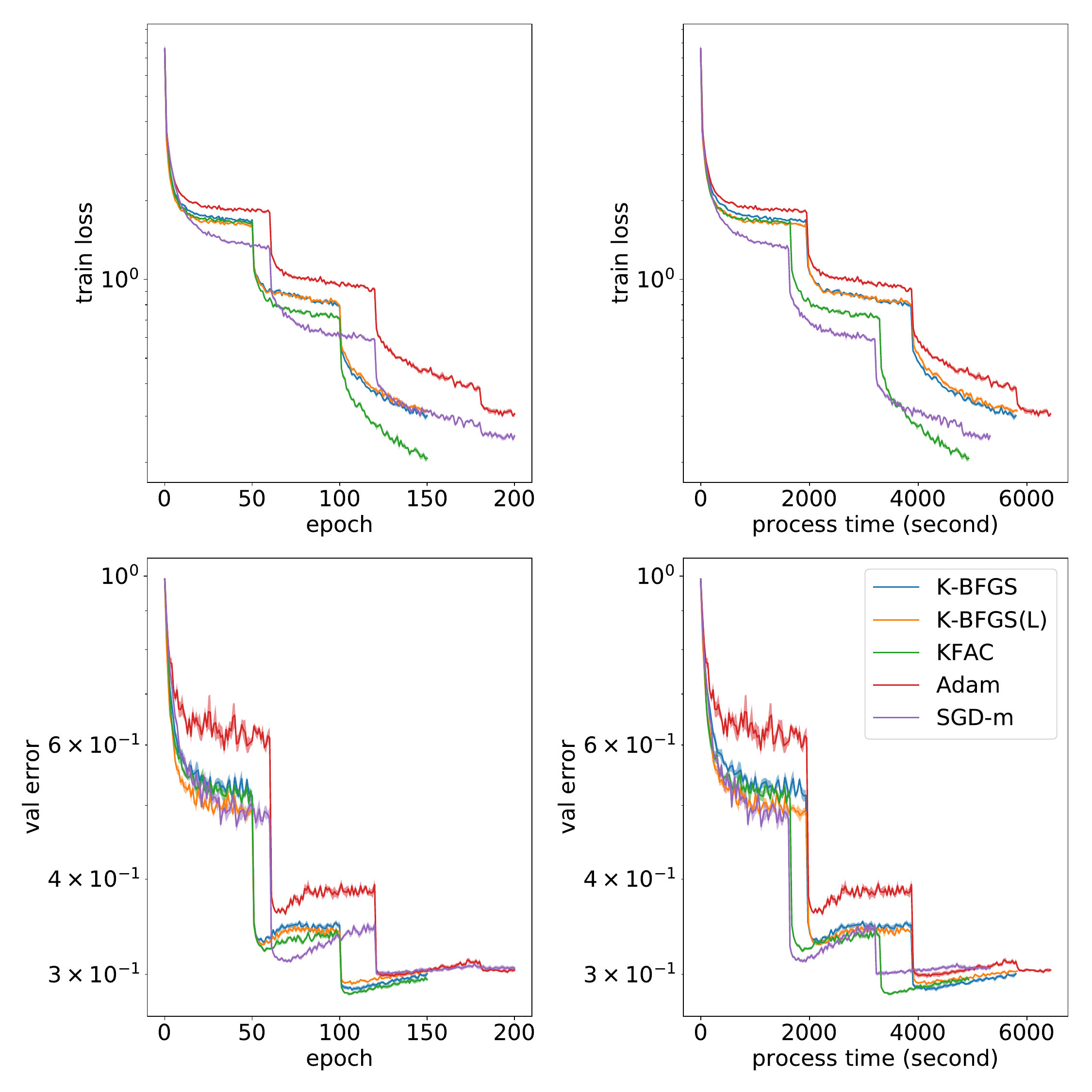}
  \caption{
   Performance of K-BFGS, K-BFGS(L), KFAC, Adam, and SGD-m on ResNet32 with CIFAR100
  }
  \label{fig_9}
\end{figure}

\end{document}